\theoremstyle{plain}
\newtheorem{theorem}{Theorem}[section]
\newtheorem{proposition}[theorem]{Proposition}
\newtheorem{lemma}[theorem]{Lemma}
\newtheorem{corollary}[theorem]{Corollary}
\theoremstyle{definition}
\newtheorem{definition}[theorem]{Definition}
\theoremstyle{remark}
\newtheorem{remark}[theorem]{Remark}
\definecolor{tabblue}{HTML}{1F77B4}
\definecolor{taborange}{HTML}{FF7F0E}
\definecolor{tabred}{HTML}{D62728}
\definecolor{tabcyan}{HTML}{17BECF}
\definecolor{tabgreen}{HTML}{2CA02C}
\newcommand{\cmark}{\ding{51}}%
\newcommand{\xmark}{\ding{55}}%
\newcommand{\ind}[1]{\mathds{1}\left( #1 \right)}
\newcommand{\methodname}{\texttt{LipsLev}}
\newcommand{\pcomment}[1]{\textcolor{teal}{\text{[#1]}}}
\providecommand{\lips}{Lipschitz}
\newcommand{\rebuttal}[1]{\textcolor{black}{#1}}
\newcommand{\mathcomment}[1]{\textcolor{teal}{\text{[#1]}}}
\newcommand{\lp}[2]{\left|\left|#1\right|\right|_{#2}}
\DeclareMathOperator*{\argmax}{arg\max}
\providecommand{\realnum}{\mathbb{R}}
\newcommand{\Repe}[2]{
    \foreach \n in {1,...,#1}{#2}
}
\title{Certified Robustness Under Bounded\\Levenshtein Distance}
\author{Elias Abad Rocamora$^{\includegraphics[width=0.7cm]{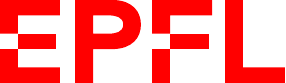}
}$, Grigorios G. Chrysos$^{\includegraphics[width=0.25cm]{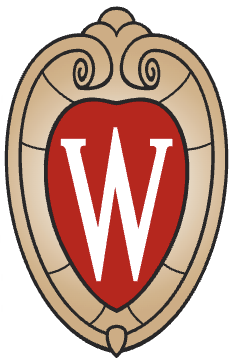}
}$, Volkan Cevher$^{}$\\
        \includegraphics[width=0.8cm]{logos/Logo_EPFL.pdf} : LIONS - École Polytechnique Fédérale de Lausanne, Switzerland\\
        \raisebox{-0.1cm}{\includegraphics[width=0.3cm]{logos/wisc_minimum.png}} : Department of Electrical and Computer Engineering, University of Wisconsin-Madison, USA\\
\footnotesize{\texttt{\{elias.abadrocamora, volkan.cevher\}@epfl.ch}, \texttt{chrysos@wisc.edu}}\\
}
\begin{document}
\iclrfinalcopy
\maketitle

\begin{abstract}
Text classifiers suffer from small perturbations, that if chosen adversarially, can dramatically change the output of the model. Verification methods can provide robustness certificates against such adversarial perturbations, by computing a sound lower bound on the robust accuracy. Nevertheless, existing verification methods incur in prohibitive costs and cannot practically handle Levenshtein distance constraints. We propose the first method for computing the \lips{} constant of convolutional classifiers with respect to the Levenshtein distance. We use these \lips{} constant estimates for training 1-\lips{} classifiers. This enables computing the certified radius of a classifier in a single forward pass. Our method, \methodname{}, is able to obtain $38.80\%$ and $13.93\%$ verified accuracy at distance $1$ and $2$ respectively in the AG-News dataset, while being $4$ orders of magnitude faster than existing approaches. We believe our work can open the door to more efficient verification in the text domain.

\end{abstract}

\section{Introduction}
\label{sec:intro}

Despite the impressive performance of Natural Language Processing (NLP) models \citep{sutskever2014sequence,zhang2015character,devlin2019bert}, simple corruptions like typos or synonym substitutions are able to dramatically change the prediction of the model \citep{belinkov2018synthetic,alzantot2018generating}. With newer attacks in NLP becoming stronger \citep{hou2023textgrad}, verification methods become relevant for providing future-proof robustness certificates \citep{liu2021algorithms}.

Constraints on the Levenshtein distance \citep{levenshtein1966binary} provide a good description of the perturbations a model should be robust to \citep{morris2020reevaluating} and strong attacks incorporate such constraints \citep{gao2018deepwordbug,ebrahimi2018hotflip,liu2022character,abadrocamora2024charmer}. Despite the success of verification methods in the text domain, existing methods can only certify probabilistically via randomized smoothing \citep{cohen2019randomized,ye2020safer,huang2023rsdel}, or can only handle specifications such as replacements of characters/words, stop-word removal or word duplication \citep{huang2019IBPsymbols,jia2019certified,shi2020robustness,zhang2021certified}. %

On the performance side, most successful certification methods rely on Interval Bound Propagation (IBP) \citep{Moore2009IntroductionTI}, which in the text domain requires multiple forward passes through the first layers of the model \citep{huang2019IBPsymbols}, unlike in the image domain where a single forward pass is enough for verification \citep{Wang2018intervals}. Moreover, IBP has been shown to provide a suboptimal verified accuracy in the image domain \citep{wang2021beta}.

In the image domain, a popular approach to get fast robustness certificates is computing upper bounds on the \lips{} constant of classifiers, and using this information to directly verify with a single forward pass \citep{hein2017formal,tsuzuku2018lipschitz,latorre2020lipschitz,xu2022lot}. These methods cannot be trivially applied in NLP because they assume the input to be in an $\ell_p$ space such $\realnum^d$, which is not the case of text input, where the input length can vary and inputs are discrete (characters). Therefore, we need to rethink \lips{} verification for NLP.

In this work, we introduce the first method able to provide deterministic Levenshtein distance certificates for convolutional classifiers. This is achieved by computing the \lips{} constant of \rebuttal{intermediate layers} with respect to the ERP distance \citep{chen2004ERP}. Our \lips{} constant estimates allow enforcing $1$-\lips{}ness during training in order to achieve a larger verified accuracy. Our experiments in the AG-News, SST-2, Fake-News and IMDB datasets show non-trivial certificates at distances $1$ and $2$, taking $4$ to $7$ orders of magnitudes less time to verify. Furthermore, our method is the only one able to verify under Levenshtein distance larger than $1$. We set the foundations for \lips{} verification in NLP and we believe our method can be extended to more complex models. %

\textbf{Notation:} We use uppercase bold letters for matrices $\bm{X} \in \realnum^{m\times n}$, lowercase bold letters for vectors $\bm{x} \in \realnum^{m}$ and lowercase letters for numbers $x\in \realnum$. Accordingly, the $i^{\text{th}}$ row and the element in the $i,j$ position of a matrix $\bm{X}$ are given by $\bm{x}_{i}$ and $x_{ij}$ respectively. We use the shorthand $[n] = \{0,1,\cdots,n-1\}$ for any natural number $n$. Given two matrices $\bm{A} \in \realnum^{m\times d}$ and $\bm{B} \in \realnum^{n\times d}$%
$\bm{A}\oplus\bm{B} = \left[\begin{matrix}
    \bm{A} \\
    \bm{B}
\end{matrix}\right] \in \realnum^{(m+n)\times d}$.
Concatenating with the empty sequence $\emptyset$ results in the identity $\bm{A}\oplus \emptyset = \bm{A}$. We denote as $\bm{A}_{2:} \in \realnum^{(m-1)\times d}$ the matrix obtained by removing the first row. We denote the zero vector as $\bm{0}$ with dimensions appropriate to context. We use the operator $|\cdot|$ for the size of sets, e.g., $|\mathcal{S}(\Gamma)|$ and the length of sequences, e.g., for $\bm{X} \in \realnum^{m\times n}$, we have $|\bm{X}|=m$.

\begin{table}[]
    \centering
    \caption{\textbf{State of the art in Levenstein distance verification and our contributions:} \methodname{} is the first to verify deterministically against Levenshtein distance constraints in a single forward pass.}
    \begin{tabular}{lcccc}
        \toprule
        Method & Insertions/deletions & Deterministic & Single forward pass\\
        \midrule
        \citet{huang2019IBPsymbols} & \textcolor{tabred}{\xmark} & \textcolor{tabgreen}{\cmark} & \textcolor{tabred}{\xmark}\\
        \citet{huang2023rsdel} & \textcolor{tabgreen}{\cmark} & \textcolor{tabred}{\xmark} & \textcolor{tabred}{\xmark}\\
        \methodname{} (Ours) & \textcolor{tabgreen}{\cmark} & \textcolor{tabgreen}{\cmark} & \textcolor{tabgreen}{\cmark}\\
        \bottomrule
        \vspace{-25pt}
    \end{tabular}
    \label{tab:contributions}
\end{table}

\section{Related Work}
\label{sec:related_work}

\textbf{\lips{} verification:} \citet{hein2017formal} are the first to study the computation of the \lips{} constant in order to provide formal guarantees of the robustness of support vector machines and two-layer nueral networks. \citet{tsuzuku2018lipschitz} compute \lips{} constant upper bounds for deeper networks and regularize such upper bounds to improve certificates. Since then, tighter upper bounds for the \lips{} constant have been proposed \citep{huang2021training,fazlyab2019efficient,latorre2020lipschitz,shi2022efficiently}. A variety of works propose constraining the \lips{} constant to be $1$ during training in order to have automatic robustness certificates \citep{cisse2017parseval,qian2018lnonexpansive,gouk2021regularisation,xu2022lot}. All previous works center in the standard $\ell_p$ norms and cannot be applied to the NLP domain. Our work provides the first $1$-\lips{} training method for the Levenshtein distance. 

\textbf{Verfication in NLP:} %
\citet{jia2019certified} propose using Interval Bound Propagation via an over-approximation of the embeddings of the set of synonyms of each word. Concurrently, \citet{huang2019IBPsymbols} incorporate this technique for verifying against replacements of nearby characters in the \rebuttal{E}nglish keyboard. \citet{bonaert2021deept,shi2020robustness} propose zonotope abstractions and IBP for verifying against synonym substitutions in transformer models. \citet{zhang2021certified} propose a verification procedure that can handle a small number of input perturbations for LSTM classifiers. Deviating from these approaches, \citet{ye2020safer} propose using randomized smoothing techniques \citet{cohen2019randomized} in order to verify probabilistically against character substitutions. \citet{huang2023rsdel} used similar techniques in order to probabilistically verify under Levenshtein distance specifications. \citet{zhao2022certified} propose a framework to verify under word substitutions via Causal Interventions. \rebuttal{\citet{sun2023textverifier} derive probable upper and lower bounds of the certified radius under word substitutions.} \citet{zhang2024text} employ randomized smoothing to verify against word (synonym) substitutions, insertions, deletions and reorderings. \citet{zeng2023certified} propose a randomized smoothing technique that does not rely on knowing how attackers generate synonyms. %
In \cref{tab:contributions} we highlight the differences with existing works in NLP verification.

\section{Preliminaries}
\label{sec:preliminaries}

Let $\mathcal{S}(\Gamma) = \{c_1c_2\cdots c_m : c_i \in \Gamma~\forall m \in \mathbb{N} \setminus 0\}$ be the space of sequences of characters in the alphabet set $\Gamma$. We represent sentences $\bm{S}\in \mathcal{S}(\Gamma)$ as sequences of one-hot vectors, i.e., $\bm{S}\in \{0,1\}^{m\times|\Gamma|}: \lp{\bm{s}_i}{1} = 1,~~ \forall i \in [m]$. %
Given a classification model $\bm{f}: \mathcal{S}(\Gamma) \to \realnum^{o}$ assigning scores to each of the $o$ classes, the predicted class for some $\bm{S}\in \mathcal{S}(\Gamma)$ is given by $\hat{y} = \argmax_{i\in [o]}f(\bm{S})_{i}$. Our goal is to check whether for a given pair $(\bm{S},y) \in (\mathcal{S}(\Gamma)\times[o])$:%
\begin{equation}
    f(\bm{S}')_{y} - \max_{\hat{y}\neq y}f(\bm{S}')_{\hat{y}} > 0,~~\forall \bm{S}'\in \mathcal{S}(\Gamma) : d_{\text{Lev}}(\bm{S},\bm{S}')\leq k\,,
    \label{eq:lev_robustness}
\end{equation}
where $d_{\text{Lev}}$ is the Levenshtein distance \citep{levenshtein1966binary}. The Levenshtein distance is defined as follows:
\[
    d_{\text{Lev}}(\bm{S}, \bm{S}') := \left\{\begin{matrix}
        |\bm{S}| & \quad \text{if}~|\bm{S}'| = 0\\
        |\bm{S}'| & \quad \text{if}~|\bm{S}| = 0\\
        d_{\text{Lev}}(\bm{S}_{2:}, \bm{S}'_{2:}) & \quad \text{if}~\bm{s}_{1} = \bm{s}'_{1}\\
        1+\min\left\{\begin{matrix}
            d_{\text{Lev}}(\bm{S}_{2:}, \bm{S}'_{2:})\\
            d_{\text{Lev}}(\bm{S}_{2:}, \bm{S}')\\
            d_{\text{Lev}}(\bm{S}, \bm{S}'_{2:})\\
        \end{matrix}\right\}& \quad \text{otherwise}\;.\\
    \end{matrix}\right.
    \label{eq:lev}
\]

The Levenshtein distance captures the number of character replacements, insertions or deletions needed in order to transform $\bm{S}$ into $\bm{S}'$ and vice-versa. Such constraints are employed in popular NLP attacks in order to enforce the imperceptibility of the attack \citep{gao2018deepwordbug,ebrahimi2018hotflip,liu2022character,abadrocamora2024charmer} following the findings of \citet{morris2020reevaluating}. %

\subsection{Interval Bound Propagation (IBP)}

Existing robustness verification approaches rely on IBP for verifying the robustness of text models \citep{huang2019IBPsymbols,jia2019certified}. IBP relies on the input being constrained in a box. Let $\bm{x}, \bm{l}, \bm{u} \in \realnum^{d}$, every element of $\bm{x}$ is assumed to be in an interval given by $\bm{l}$ and $\bm{u}$, i.e., $l_{i} \leq x_{i} \leq u_i ~\forall i \in [d]$ or $\bm{x} \in [\bm{l}, \bm{u}]$ for short. These constraints arise naturally when studying robustness in the $\ell_{\infty}$ norm, as the constraint $\bm{x} \in \{\bm{x}^{(0)} + \bm{\delta} : \lp{\bm{\delta}}{\infty} \leq \epsilon\}$ can exactly be represented as $\bm{x} \in [\bm{x}^{(0)} - \epsilon, \bm{x}^{(0)} + \epsilon]$. IBP consists in a set of rules to obtain interval constraints of the output of a function, given the interval constraints of the input. In the case of an affine mapping $\bm{f}(\bm{x}) = \bm{W}\bm{x} + \bm{b}$, we can easily obtain the interval constraints $\bm{f}(\bm{x}) \in [\bm{l}_{\bm{f}}(\bm{x}), \bm{u}_{\bm{f}}(\bm{x})], ~ \forall \bm{x} \in [\bm{l}, \bm{u}]$ with:
\begin{equation}
	\bm{l}_{\bm{f}}(\bm{x}) = \bm{W}^{+}\bm{l} +  \bm{W}^{-}\bm{u} + \bm{b}, \quad \bm{u}_{\bm{f}}(\bm{x}) = \bm{W}^{+}\bm{u} +  \bm{W}^{-}\bm{l} + \bm{b}\,,
	\label{eq:ibp_linear}
\end{equation}
where $\bm{W}^{+}$ and $\bm{W}^{-}$ are the positive and negative parts of $\bm{W}$. In the case of the ReLU activation function $\bm{\sigma}(\bm{x}) = \max\{0, \bm{x}\}$, we have that:
\begin{equation}
	\bm{l}_{\bm{\sigma}}(\bm{x}) = \bm{\sigma}(\bm{l}), \quad \bm{u}_{\bm{\sigma}}(\bm{x}) =  \bm{\sigma}(\bm{u})\,.
	\label{eq:ibp_relu}
\end{equation}
By applying recursively the simple rules in \cref{eq:ibp_linear,eq:ibp_relu}, one can easily verify robustness properties of ReLU fully-connected and convolutional networks \citep{Wang2018intervals}.

Nevertheless, IBP has two main limitations:
\begin{itemize}
	\item[a)] IBP assumes the input space to be of fixed length, e.g., $\realnum^{d}$.
	\item[b)] IBP can only handle interval constrained inputs, e.g., $\bm{x} \in [\bm{l}, \bm{u}]$.
\end{itemize}

Limitation a) makes it impossible to verify Levenshtein distance constraints as they include insertion and deletion operations, which change the length of the input sequence. In the literature, limitation a) forces existing verification methods to only consider replacements of characters/words \citep{huang2019IBPsymbols,jia2019certified,shi2020robustness,bonaert2021deept,zhang2021certified}.

Limitation b) can be circumvented by building an over approximation of the replacement constraints that can be represented with intervals. In the case of text, one can directly build an over approximation of the embeddings. Let $\bm{Z} = \bm{S}\bm{E} \in \realnum^{m \times d}$, where $\bm{S}\in \mathcal{S}(\Gamma)$ is the sequence of one-hot vectors representing each character/word, and $\bm{E}\in \realnum^{|\Gamma|\times d}$ is the embedding matrix. Let $d_{\text{edit}}$ be the edit distance without insertions and deletions, our constraint in the edit distance (\cref{eq:lev_robustness}) translates in the embedding space to the set:
\[
\mathcal{Z}_k(\bm{S}) = \{\bm{S}'\bm{E} : d_{\text{edit}}(\bm{S}, \bm{S}') \leq k, \bm{S}' \in \mathcal{S}(\Gamma)\}\,,
\]
\rebuttal{where $d_{\text{edit}}(\mathbf{S}, \mathbf{S}') = \sum_{i=1}^{m} ||\bm{s}_i - \bm{s}'_i||_{\infty}$ for any length $m$ sequences of one-hot vectors $\bm{S},\bm{S}'$.}
We can overapproximate this set with interval constraints such that $\hat{\bm{Z}} \in [\bm{L}, \bm{U}]$, with $l_{ij} = \min_{\bm{Z} \in \mathcal{Z}_k(\bm{S})}z_{ij}$ and $u_{ij} = \max_{\bm{Z} \in \mathcal{Z}_k(\bm{S})}z_{ij}$. But, because we can replace any character/word at any position, we end up with $\bm{L} = \bm{l} \oplus \bm{l} \oplus \cdots \oplus \bm{l}$ and $\bm{U} = \bm{u} \oplus \bm{u} \oplus \cdots \oplus \bm{u}$, where:
\[
	l_i = \min_{k \in [|\Gamma|]} e_{ki}, \quad u_i = \max_{k \in [|\Gamma|]} e_{ki}, \quad \forall i\in [d]\,.
\]
Therefore, this overapproximation contains the embeddings of any $\bm{S}' \in  \{0,1\}^{m\times |\Gamma|} : \lp{\bm{s}'_{i}}{1} =1, ~~\forall i\in [m]$, i.e., \emph{every sentence of length m}, making verification impossible. To circumvent this, existing methods focus on the synonym replacement task, further restricting $\mathcal{Z}_k(\bm{S})$ to only replace words for a word in a small set of synonyms \citep{jia2019certified,shi2020robustness,bonaert2021deept}. Alternatively, \citet{huang2019IBPsymbols} compute the over approximation after the pooling layer of the model, circumventing this problem. Nevertheless, their approach requires $\left|\mathcal{Z}_1(\bm{S})\right|$ forward passes. This number of forward passes can be in the order of tens of thousands for large $m$ and $|\Gamma|$.

Our \lips{} constant based approach, \methodname{}, can handle sequences of any length and requires a single forward pass through the model.

 \section{Method}
 \label{sec:method}
 
 In \cref{subsec:lips_for_text} we cover the verification procedure once the \lips{} constant of a classifier is known. %
 In \cref{subsec:lips_estimation} we cover the convolutional architectures employed in \citet{huang2019IBPsymbols} and our \lips{} constant estimation for them. Lastly, we introduce our training strategy in order to achieve non-trivial verified accuracy in \cref{subsec:1_lips}. We defer our proofs to \cref{app:proofs}.
 
 \subsection{\lips{} constant based verification}
 \label{subsec:lips_for_text}
 
 Motivated by the success and efficiency of \lips{} constant based certification in vision tasks \citep{huang2021training,xu2022lot}, we propose a method of this kind that can handle previously studied models in the character-level classification task \citep{huang2019IBPsymbols}, and provide Levenshtein distance certificates.

Our goal is to compute the \rebuttal{global} \lips{} constant. Let $g_{y,\hat{y}}(\bm{S}) = f(\bm{S})_{y} - f(\bm{S})_{\hat{y}}$ be the margin function for classes $y$ and $\hat{y}$, we would like to have for some $\bm{S}$:
\begin{equation}
	|g_{y,\hat{y}}(\bm{S}) - g_{y,\hat{y}}(\bm{S}')|\leq G_{y,\hat{y}}\cdot d_{\text{Lev}}(\bm{S},\bm{S}') ~~\forall \bm{S}'\in \mathcal{S}(\Gamma)\,,
	\label{eq:lipschitz}
\end{equation}
for some $G_{y,\hat{y}}\in \realnum^{+}$. Given \cref{eq:lipschitz} is satisfied, the maximum distance up to which we can verify \cref{eq:lev_robustness}, is lower bounded by:
\begin{equation}
\max\{k: g_{y,\hat{y}}(\bm{S}') > 0 ~~\forall \bm{S}' \in \mathcal{S}(\Gamma): d_{\text{Lev}}(\bm{S},\bm{S}')\leq k\} \geq \left\lfloor\frac{ g_{y,\hat{y}}(\bm{S})}{G_{y,\hat{y}}}\right\rfloor\,.
\label{eq:max_radius}
\end{equation}
Let $k^{\star}_{y,\hat{y}}(\bm{S}) \coloneqq \left\lfloor\frac{ g_{y,\hat{y}}(\bm{S})}{G_{y,\hat{y}}}\right\rfloor$, we denote $k^{\star}_{y}(\bm{S}) \coloneqq \min_{\hat{y}\neq y} k^{\star}_{y,\hat{y}}(\bm{S})$ to be the \emph{certified radius}.

 \subsection{\lips{} constant estimation for convolutional classifiers}
\label{subsec:lips_estimation}

Let $\bm{S}\in \mathcal{S}(\Gamma)$ be a sequence of one-hot vectors, our classifier is defined as:
\begin{equation}
    \bm{f}(\bm{S}) = \left(\sum_{i=1}^{m+l\cdot(q-1)}f^{(l)}_{i}(\bm{S})\right)\bm{W}, \text{where}~
    \bm{f}^{(j)}(\bm{S}) = \left\{\begin{matrix}
        \bm{\sigma}\left(\bm{C}^{(j)}\left(\bm{f}^{(j-1)}(\bm{S})\right)\right) & \forall j = 1, \cdots, l\\
        \bm{S}\bm{E} & j = 0\\
        
    \end{matrix}\right.
    \,,
\label{eq:conv_model}
\end{equation}
where $\bm{E}\in \realnum^{v\times d}$ is the embeddings matrix, $\bm{C}^{(i)}, \forall i=1,\cdots, l$ are convolutional layers with kernel size $q$ and hidden dimension $k$. $\sigma$ is the ReLU activation function and $\bm{W}\in \realnum^{k\times o}$ is the last classification layer. This architecture was previously studied in verification by \citep{huang2019IBPsymbols,jia2019certified}.

Our approach to estimate the global \lips{} constant of such a classifier is to compute the \lips{} constant of each layer. Then, since the overall function in \cref{eq:conv_model} is the sequential composition of all of the layers, we can just multiply the \lips{} constants to obtain the global one. 
However, in order to be able to do this, we need some metric with respect to which we can compute the \lips{} constant. The Levenshtein distance cannot be applied, as it can only measure distances between one-hot vectors and the outputs of intermediate layers are sequences of real vectors. For this task, we select the \emph{ERP distance} \citep{chen2004ERP}:

\begin{definition}[ERP distance \citep{chen2004ERP}]
	Let $\bm{A}\in \realnum^{m\times d}$ and $\bm{B}\in \realnum^{n\times d}$ be two sequences of $m$ and $n$ real vectors respectively \rebuttal{and $p\geq 1$}. The ERP distance is defined as:
	\[
	d^{\rebuttal{p}}_{\text{ERP}}(\bm{A},\bm{B}) = \left\{\begin{matrix}
		\sum_{i=1}^{m}\lp{\bm{a}_{i}}{p} & \text{if}~ n=0~(\bm{B} = \emptyset)\\
		\sum_{i=1}^{n}\lp{\bm{b}_{i}}{p} & \text{if}~ m=0~(\bm{A} = \emptyset)\\
		\min\left\{\begin{matrix}
			\lp{\bm{a}_{1}}{p}+d^{\rebuttal{p}}_{\text{ERP}}(\bm{A}_{2:},\bm{B}),\\
			\lp{\bm{b}_{1}}{p}+d^{\rebuttal{p}}_{\text{ERP}}(\bm{A},\bm{B}_{2:}),\\
			\lp{\bm{a}_{1} - \bm{b}_{1}}{p}+d^{\rebuttal{p}}_{\text{ERP}}(\bm{A}_{2:},\bm{B}_{2:})\\
		\end{matrix}\right\} & \text{otherwise}
	\end{matrix}\right.
	\]
The ERP distance is a natural extension of the Levenshtein distance for sequences of real valued vectors. In fact, in the case we compare sequences of one-hot vectors and we set $p=\infty$, we recover the Levenshtein distance, see \cref{lem:erp_props}. 
\label{def:erp_dist}
\end{definition}

In the following we define a useful representation of convolutional layers.

\begin{definition}[1D Convolutional layer with zero padding]%
Let $\bm{A} \in \realnum^{m\times d}$ be a sequence of $d$-dimensional vectors. Let $k$ be the number of filters and $q$ the kernel size, a convolutional layer $\bm{C}: \realnum^{m\times d} \to \realnum^{(m+q-1)\times k}$ with parameters $\bm{\mathcal{K}} \in \realnum^{q\times k \times d}$ can be represented as:
\[
	\bm{c}_i(\bm{A}) = \sum_{j=1}^{m+2\cdot(q-1)}\hat{\bm{K}}_{i,j}\hat{\bm{a}}_j, ~~ \text{where} ~ 
    \hat{\bm{K}}_{i,j} = \left\{\begin{matrix}
		\bm{K}_{j-i+1} & \text{if } 0\leq j-i\leq q-1\\
		\bm{0}\bm{0}^{\top} & \text{otherwise}
	\end{matrix}\right.\,, ~~\forall i \in [m+q-1]\,,
 \]
and $\hat{\bm{A}}=\bm{0}_{(q-1)\times d} \oplus \bm{A} \oplus \bm{0}_{(q-1)\times d} \in \realnum^{(m+\rebuttal{2\cdot}(q-1))\times d}$ is the zero-padded input. We denote the parameter tensor corresponding to every layer $\bm{C}^{(i)}$ as $\mathcal{\bm{K}}^{(i)}$.
\label{def:conv}
\end{definition}

In \cref{theo:lips_conv} we present our \lips{} constant upper bound. In \cref{cor:certified_radius} the \lips{} constant upper bound is employed to compute the certified radius at a sentence $\bm{P}$. The \lips{} constant upper bound can be further refined considering the local \lips{} constant of the embedding layer around sentence $\bm{P}$, see \cref{rem:local_lips_emb}.

\begin{theorem}[\lips{} constant of margins of convolutional models]
Let $\bm{f}$ be defined as in \cref{eq:conv_model}. Let $g_{y,\hat{y}}(\bm{S}) = f(\bm{S})_{y} - f(\bm{S})_{\hat{y}}$ be the margin function for classes $y$ and $\hat{y}$. \rebuttal{Let $p\geq 1$.} Let $\bm{P}$ and $\bm{Q}$ be sequences of one-hot vectors, we have that for any $y$ and $\hat{y}$:
\[
\left|g_{y,\hat{y}}(\bm{P})-g_{y,\hat{y}}(\bm{Q})\right|\leq \lp{\bm{w}_{\hat{y}}- \bm{w}_{y}}{r}\cdot \left(\prod_{i=1}^{l}M\left(\bm{\mathcal{K}}^{(i)}\right)\right)\cdot M(\bm{E})\cdot d_{\text{Lev}}\left(\bm{P},\bm{Q}\right)\,,
\]
where  $M(\bm{\mathcal{K}})=\sum_{i=1}^{q}\lp{\bm{K}_i}{p}$, $M(\bm{E}) = \max\{\underset{i\in [|\Gamma|]}{\max}\lp{\bm{e}_{i}}{p},\underset{i,j\in [|\Gamma|]}{\max}\lp{\bm{e}_{i}-\bm{e}_j}{p}\}$ and $\frac{1}{p} + \frac{1}{r} = 1$.\footnote{In the case $p=1$ and $p=\infty$, we have $r=\infty$ and $r=1$ respectively.}
\label{theo:lips_conv}
\end{theorem}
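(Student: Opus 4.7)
The plan is to obtain the bound by layerwise composition of \lips{} constants. Since the input of $\bm{f}$ lives in the discrete space $\mathcal{S}(\Gamma)$ while the outputs of intermediate layers are variable-length sequences of real vectors, we use $d_{\text{Lev}}$ as the metric at the input and $d^p_{\text{ERP}}$ as the metric at every intermediate representation. Concretely, I would prove a per-layer \lips{} inequality of the form $d^p_{\text{ERP}}(\mathrm{layer}(\bm{A}), \mathrm{layer}(\bm{B})) \leq L \cdot d^p_{\text{ERP}}(\bm{A}, \bm{B})$ with $L$ matching the corresponding factor in the theorem, then multiply them together, and finally close the chain to the scalar margin with Hölder's inequality.

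Four intermediate lemmas would do the work. First, the embedding layer: for one-hot sequences $\bm{P},\bm{Q}$, I would show $d^p_{\text{ERP}}(\bm{P}\bm{E},\bm{Q}\bm{E}) \leq M(\bm{E}) \cdot d_{\text{Lev}}(\bm{P},\bm{Q})$ by induction on $|\bm{P}|+|\bm{Q}|$, using the recursive definition of $d_{\text{Lev}}$: a match contributes $0$, an insertion/deletion of character $c_i$ contributes $\|\bm{e}_{c_i}\|_p \leq M(\bm{E})$, and a substitution $c_i \to c_j$ contributes $\|\bm{e}_{c_i}-\bm{e}_{c_j}\|_p \leq M(\bm{E})$. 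Second, ReLU: since $|\sigma(x)|\leq|x|$ componentwise, both substitution costs $\|\sigma(\bm{a})-\sigma(\bm{b})\|_p$ and insertion/deletion costs $\|\sigma(\bm{a})\|_p$ are bounded by their pre-activation counterparts, so ReLU is $1$-\lips{} w.r.t. $d^p_{\text{ERP}}$. Third, the sum-then-project operation: writing the margin as $g_{y,\hat{y}}(\bm{S}) = \left(\sum_i \bm{f}^{(l)}_i(\bm{S})\right)(\bm{w}_y-\bm{w}_{\hat{y}})$, Hölder's inequality yields the factor $\|\bm{w}_y-\bm{w}_{\hat{y}}\|_r$, and the triangle inequality applied along an optimal ERP alignment bounds $\|\sum_i\bm{f}^{(l)}_i(\bm{P}) - \sum_i\bm{f}^{(l)}_i(\bm{Q})\|_p$ by $d^p_{\text{ERP}}(\bm{f}^{(l)}(\bm{P}),\bm{f}^{(l)}(\bm{Q}))$ (matched rows contribute $\bm{a}_i-\bm{b}_j$, insertions contribute $\pm\bm{a}_i$ or $\pm\bm{b}_j$).

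The main obstacle is the fourth lemma, for convolutions: $d^p_{\text{ERP}}(\bm{C}(\bm{A}),\bm{C}(\bm{B})) \leq M(\bm{\mathcal{K}}) \cdot d^p_{\text{ERP}}(\bm{A},\bm{B})$. My plan is to argue via an alignment-lifting construction, proven by induction on $|\bm{A}|+|\bm{B}|$ through the three cases of the ERP recursion. For a substitution $\bm{a}_i \mapsto \bm{b}_i$ (equal length inputs differing only at one position), exactly $q$ outputs change, with the $j$-th affected output differing by $\bm{K}_{j-i+1}(\bm{a}_i-\bm{b}_i)$, so substituting those $q$ outputs costs at most $\sum_{j=1}^q \|\bm{K}_j\|_p \cdot \|\bm{a}_i-\bm{b}_i\|_p = M(\bm{\mathcal{K}})\|\bm{a}_i-\bm{b}_i\|_p$. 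For an insertion of a vector $\bm{v}$, the output length grows by one and the $q$ outputs whose receptive field covers the inserted position each acquire an extra contribution $\bm{K}_j\bm{v}$; I would align the two output sequences by matching everything outside this window, inserting one new output (cost $\|\bm{K}_{j^\star}\bm{v}\|_p$ for some chosen $j^\star$) and substituting the remaining $q-1$ (each of cost $\|\bm{K}_j\bm{v}\|_p$), for a total at most $\sum_{j=1}^q \|\bm{K}_j\|_p \cdot \|\bm{v}\|_p = M(\bm{\mathcal{K}})\|\bm{v}\|_p$. Deletions are symmetric.

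Finally, composing the four lemmas: start from the embedding bound with constant $M(\bm{E})$, pick up a factor $M(\bm{\mathcal{K}}^{(i)})$ per convolution (ReLU contributes $1$), and close with Hölder and the alignment triangle inequality to collect $\|\bm{w}_y-\bm{w}_{\hat{y}}\|_r$. Telescoping yields exactly the theorem statement, and using $d_{\text{Lev}}(\bm{P},\bm{Q})$ at the input is justified because $\bm{P},\bm{Q}$ are sequences of one-hot vectors, for which Lemma~\ref{lem:erp_props} equates the ERP distance (at $p=\infty$) with $d_{\text{Lev}}$, the relevant gateway used by the embedding step.
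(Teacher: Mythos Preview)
Your proposal is correct and follows the same high-level architecture as the paper's proof: both decompose the network into per-layer \lips{} bounds with respect to $d^p_{\text{ERP}}$ (embedding, convolution, ReLU, sum-then-project) and chain them together, closing with H\"older's inequality. Your four lemmas correspond exactly to the paper's Lemmas~\ref{lem:linear_transform}, \ref{lem:conv}, \ref{lem:elementwise-lips}, and~\ref{lem:sumdiff}.

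The one substantive technical difference is in how the convolution bound is established. You propose an inductive argument through the ERP recursion, handling single insertions, deletions, and substitutions explicitly and invoking the triangle inequality for $d^p_{\text{ERP}}$ to compose them. The paper instead works via the zero-padding characterisation of the ERP distance (Proposition~\ref{prop:alt_def_derp}): it writes both output sequences under a common alignment, pulls the kernel matrices $\bm{K}_j$ out of the norm, and observes that the resulting inner sum equals $d^p_{\text{ERP}}(\bm{F}(\bm{P}),\bm{F}(\bm{Q}))$ for every window shift $j$. Your route is more elementary and geometrically transparent (one sees exactly which $q$ outputs absorb each edit), at the cost of some casework; the paper's route is shorter and avoids the induction, but requires first setting up the alternative definition via zero-paddings. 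Both yield the identical constant $M(\bm{\mathcal{K}})=\sum_{i=1}^q\lp{\bm{K}_i}{p}$.
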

\begin{proof}
    See \cref{app:proofs}
\end{proof}
\begin{corollary}[Certified radius of convolutional models]
    Let $\bm{f}$ be defined as in \cref{eq:conv_model} and the \lips{} constant of $g_{y,\hat{y}}$ be:
    \[
        G_{y,\hat{y}} = \lp{\bm{w}_{\hat{y}}- \bm{w}_{y}}{r}\cdot \left(\prod_{i=1}^{l}M\left(\bm{\mathcal{K}}^{(i)}\right)\right)\cdot M(\bm{E})\,.
    \]
    Then, the certified radius of $\bm{f}$ at the sentence $\bm{P}$ is given by:
    $
    k^{\star}_{y,\hat{y}}(S) = \min_{\hat{y} \neq y} \left\lfloor\frac{ g_{y,\hat{y}}(\bm{P})}{G_{y,\hat{y}}}\right\rfloor
    $.
    \label{cor:certified_radius}
\end{corollary}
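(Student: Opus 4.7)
The plan is to derive the corollary as a direct instantiation of \cref{theo:lips_conv} combined with the generic Lipschitz-to-radius bound already recorded in \cref{eq:max_radius}. In other words, the corollary is not introducing a new argument; it is asserting that the specific value of $G_{y,\hat{y}}$ produced by the theorem is a valid Lipschitz constant and then invoking the pre-existing definition of the certified radius.

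First I would fix an arbitrary pair $(y,\hat{y})$ with $\hat{y}\neq y$ and apply \cref{theo:lips_conv} to the sentences $\bm{P}$ and an arbitrary $\bm{Q}\in\mathcal{S}(\Gamma)$. This yields exactly the inequality
\[
\left|g_{y,\hat{y}}(\bm{P})-g_{y,\hat{y}}(\bm{Q})\right|\leq G_{y,\hat{y}}\cdot d_{\text{Lev}}(\bm{P},\bm{Q}),
\]
which is the condition required by \cref{eq:lipschitz}. Thus $G_{y,\hat{y}}$ as defined in the corollary is a legitimate Lipschitz constant of the margin function $g_{y,\hat{y}}$ with respect to the Levenshtein distance.

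Second I would invoke \cref{eq:max_radius}, which says that whenever \cref{eq:lipschitz} holds, the largest $k$ such that $g_{y,\hat{y}}(\bm{S}')>0$ for all $\bm{S}'$ within Levenshtein distance $k$ of $\bm{P}$ is lower bounded by $\lfloor g_{y,\hat{y}}(\bm{P})/G_{y,\hat{y}}\rfloor$. This is just the observation that if $g_{y,\hat{y}}(\bm{Q})\leq 0$ for some $\bm{Q}$, then by the Lipschitz property $g_{y,\hat{y}}(\bm{P})\leq G_{y,\hat{y}}\cdot d_{\text{Lev}}(\bm{P},\bm{Q})$, i.e.\ $d_{\text{Lev}}(\bm{P},\bm{Q})\geq g_{y,\hat{y}}(\bm{P})/G_{y,\hat{y}}$, and taking the floor since the Levenshtein distance is integer-valued.

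Finally I would take the minimum over $\hat{y}\neq y$: verifying \cref{eq:lev_robustness} requires $g_{y,\hat{y}}(\bm{S}')>0$ simultaneously for every $\hat{y}\neq y$, so the certified radius is the minimum of the per-class certified radii, which is exactly the expression $\min_{\hat{y}\neq y}\lfloor g_{y,\hat{y}}(\bm{P})/G_{y,\hat{y}}\rfloor$ as stated. There is no serious obstacle here; the only subtlety worth flagging is making the floor step explicit, since $k$ ranges over nonnegative integers while the Lipschitz bound is a real number. The substantive work is entirely contained in \cref{theo:lips_conv}.
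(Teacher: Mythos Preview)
Your proposal is correct and matches the paper's implicit approach: the corollary is stated without an explicit proof precisely because it is an immediate instantiation of \cref{theo:lips_conv} into the certified-radius framework already set up in \cref{eq:lipschitz,eq:max_radius}, followed by the minimum over competing classes as defined right after \cref{eq:max_radius}. You have identified exactly these ingredients and their order.
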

\begin{remark}[Local \lips{} constant of the embedding layer]
Let the embeddings of a sentence $\bm{S}$ be given by $\bm{S}\bm{E}$, we have that for any two sentences $\bm{P}$ and $\bm{Q}$:
    \[
    d^{\rebuttal{p}}_{\text{ERP}}(\bm{P}\bm{E}, \bm{Q}\bm{E})\leq M(\bm{E}, \bm{P})\cdot d_{\text{Lev}}(\bm{P}, \bm{Q})\,,
    \]
    where $M(\bm{E}, \bm{P}) = \max\{\underset{i\in [|\Gamma|]}{\max}\lp{\bm{e}_{i}}{p},\underset{i \in {|\bm{P}|},j\in [d]}{\max}\lp{\bm{p}_{i}\bm{E}-\bm{e}_j}{p}\}$, satisfying $M(\bm{E}, \bm{P})\leq M(\bm{E})$.
    \label{rem:local_lips_emb}
\end{remark}

\subsection{Training 1-\lips{} classifiers}
\label{subsec:1_lips}

Models trained with the standard Cross Entropy loss and Stochastic Gradient Descent (SGD) recipe are not amenable to verification methods, resulting in small certified radiuses. This has motivated the use of specialized training methods in the image domain \citep{mirman2018IBP,gowal2018effectiveness,mueller2023certified,palma2024expressive}. Verification methods in the text domain also require tailored training methods to achieve non-zero certified radiuses \citep{huang2019IBPsymbols,jia2019certified}. Motivated by methods enforcing classifiers to be 1-\lips{} in the image domain \citep{xu2022lot}, we enforce this constraint during training in order to improve certification.

In order to achieve a 1-\lips{} classifier, we enforce 1-\lips{}ness of every layer by dividing the output of each layer by its \lips{} constant. This results in our modified classifier being:
\begin{equation}
    \hat{\bm{f}}(\bm{S}) = \left(\sum_{i=1}^{m+l\cdot(q-1)}\hat{f}^{(l)}_{i}(\bm{S})\right)\frac{\bm{W}}{M(\bm{W})}, \text{where}~
    \hat{\bm{f}}^{(j)}(\bm{S}) = \left\{\begin{matrix}
        \frac{\bm{\sigma}\left(\bm{C}^{(j)}\left(\hat{\bm{f}}^{(j-1)}(\bm{S})\right)\right)}{M(\bm{\mathcal{K}}^{(j)})} & \forall j = 1, \cdots, l\\
       \frac{\bm{S}\bm{E}}{M(\bm{E})} & j = 0\\
    \end{matrix}\right.
\label{eq:conv_model_l}\,,
\end{equation}
where $M(\bm{W}) = \max_{y, \hat{y} \in [o]}{\lp{\bm{w}_y -\bm{w}_{\hat{y}}}{r}}$. Note that the last layer is made 1-\lips{} with respect to the worst pair of class labels. Incorporating this information and \cref{rem:local_lips_emb}, we end up with the final \lips{} constant for the classifier:
\begin{corollary}[\rebuttal{Local} \lips{} constant of modified classifiers]
Let $\hat{\bm{f}}$ be defined as in \cref{eq:conv_model_l}. Let $\hat{g}_{y,\hat{y}}(\bm{S}) = \hat{f}(\bm{S})_{y} - \hat{f}(\bm{S})_{\hat{y}}$ be the margin function for classes $y$ and $\hat{y}$. Let $\bm{P}$ and $\bm{Q}$ be sequences of one-hot vectors, we have that for any $y$ and $\hat{y}$:
\[
\left|\hat{g}_{y,\hat{y}}(\bm{P})-\hat{g}_{y,\hat{y}}(\bm{Q})\right|\leq \frac{\lp{\bm{w}_{\hat{y}}- \bm{w}_{y}}{r}}{M(\bm{W})}\cdot \frac{M(\bm{E}, \bm{P})}{M(\bm{E})}\cdot d_{\text{lev}}\left(\bm{P},\bm{Q}\right)\,,
\]
where  $M(\bm{E})$ is defined as in \cref{theo:lips_conv}, $M(\bm{E}, \bm{P})$ is as in \cref{rem:local_lips_emb} and $M(\bm{W}) = \max_{y, \hat{y} \in [o]}{\lp{\bm{w}_y -\bm{w}_{\hat{y}}}{r}}$. \rebuttal{Note that this Lipschitz constant is local as it depends on $\bm{P}$.}
\label{theo:lips_conv_l}
\end{corollary}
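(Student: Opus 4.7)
The plan is to observe that $\hat{\bm{f}}$ in equation (\ref{eq:conv_model_l}) is, up to rescaling, an instance of the architecture in equation (\ref{eq:conv_model}), and then invoke \cref{theo:lips_conv} together with \cref{rem:local_lips_emb}. Concretely, I would first exploit the positive homogeneity of ReLU, i.e. $\bm{\sigma}(c\bm{x})=c\,\bm{\sigma}(\bm{x})$ for any $c\geq 0$, combined with the linearity of $\bm{C}^{(j)}$ in its kernel, to move every scalar $1/M(\bm{\mathcal{K}}^{(j)})$ inside the convolution. The result is that $\hat{\bm{f}}$ can be rewritten as the convolutional model of equation (\ref{eq:conv_model}) with embeddings $\bm{E}/M(\bm{E})$, kernels $\bm{\mathcal{K}}^{(j)}/M(\bm{\mathcal{K}}^{(j)})$, and final weight matrix $\bm{W}/M(\bm{W})$.

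Next, I would apply \cref{theo:lips_conv} directly to this rewritten classifier, yielding
\[
\left|\hat{g}_{y,\hat{y}}(\bm{P})-\hat{g}_{y,\hat{y}}(\bm{Q})\right|\leq \lp{\tfrac{\bm{w}_{\hat{y}}-\bm{w}_{y}}{M(\bm{W})}}{r}\cdot\left(\prod_{i=1}^{l}M\!\left(\tfrac{\bm{\mathcal{K}}^{(i)}}{M(\bm{\mathcal{K}}^{(i)})}\right)\right)\cdot M\!\left(\tfrac{\bm{E}}{M(\bm{E})}\right)\cdot d_{\text{Lev}}(\bm{P},\bm{Q}).
\]
Then I would use the fact that both $M(\bm{\mathcal{K}})=\sum_i\lp{\bm{K}_i}{p}$ and $M(\bm{E})$ are positively homogeneous of degree one in their matrix arguments, so each normalised $M(\bm{\mathcal{K}}^{(i)}/M(\bm{\mathcal{K}}^{(i)}))$ collapses to $1$, and $\lp{\bm{w}_{\hat{y}}-\bm{w}_{y}}{r}$ factors out of the $r$-norm as $\lp{\bm{w}_{\hat{y}}-\bm{w}_{y}}{r}/M(\bm{W})$.

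To recover the stated ratio $M(\bm{E},\bm{P})/M(\bm{E})$, rather than leaving the embedding factor as $1$, I would re-derive the embedding step of \cref{theo:lips_conv} using the local bound of \cref{rem:local_lips_emb} applied to the rescaled embedding matrix $\bm{E}/M(\bm{E})$. By the same homogeneity argument, $M(\bm{E}/M(\bm{E}),\bm{P})=M(\bm{E},\bm{P})/M(\bm{E})$, which gives the claimed factor. Combining everything yields exactly the corollary's bound.

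The only non-routine point is the first reduction: one needs to check carefully that pulling the normalisation constants through the ReLU and the convolution produces an architecture of the form in equation (\ref{eq:conv_model}). Once that reduction is made explicit, the remainder is bookkeeping via the homogeneity of $M(\cdot)$ and a direct invocation of the already-proved \cref{theo:lips_conv} and \cref{rem:local_lips_emb}, so I do not expect any serious obstacle.
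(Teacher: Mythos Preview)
Your proposal is correct and matches the paper's intended argument. The paper does not spell out a separate proof for this corollary; it is presented as following immediately from \cref{theo:lips_conv} and \cref{rem:local_lips_emb} once one observes (as you do via the positive homogeneity of ReLU and linearity of the convolution in its kernel) that the normalised architecture in \cref{eq:conv_model_l} is an instance of \cref{eq:conv_model} with rescaled parameters $\bm{E}/M(\bm{E})$, $\bm{\mathcal{K}}^{(j)}/M(\bm{\mathcal{K}}^{(j)})$, $\bm{W}/M(\bm{W})$---the paper even remarks explicitly that after training ``the weights of each layer can be divided by its \lips{} constant, resulting in the same architecture in \cref{eq:conv_model}.''
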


Note that the \rebuttal{local} \lips{} constant estimate in \cref{theo:lips_conv_l} is guaranteed to be at most $1$ as $\lp{\bm{w}_{\hat{y}}- \bm{w}_{y}}{r} \leq M(\bm{W})$ and $M(\bm{E}, \bm{P})\leq M(\bm{E})$. Given this estimate, we can proceed similarly to \cref{cor:certified_radius} in order to obtain the certified radius of the modified model. Note that in the forward pass of \cref{eq:conv_model_l}, we need to compute $M(\bm{E}), M(\bm{\mathcal{K}}^{(j)})$ and $M(\bm{W})$, which increases the complexity of a forward pass with respect to \cref{eq:conv_model}. Nevertheless, we observe this can be efficiently done during training as seen in \cref{subsec:exp_layers}. %
Then, the weights of each layer can be divided by its \lips{} constant, resulting in the same architecture in \cref{eq:conv_model} with the guarantees of \cref{theo:lips_conv_l}.

\section{Experiments}
\label{sec:experiments}

In this section, we cover our experimental validation. In \cref{subsec:exp_setup} we cover the experimental setup and training mechanisms shared among all experiments. In \cref{subsec:comparison_ibp_bruteforce} we compare performance of our approach with existing IBP approaches and the naive brute force verification baseline. Lastly, in \cref{subsec:exp_ablations} we cover the hyperparameter selection of our method. \rebuttal{We define our performance metrics and perform additional experiments in \cref{app:experiments}.}

\subsection{Experimental setup}
\label{subsec:exp_setup}
We train and verify our models in the sentence classification datasets AG-News \citep{gulli2005agnews,zhang2015character}, SST-2 \citep{wang2018glue}, IMDB \citep{maas2011imdb} and Fake-News \citep{lifferth2018fakenews}. We consider all of the characters present in the dataset except for uppercase letters, which we tokenize as lowercase. Each character is tokenized individually and assigned one embedding vector via the matrix $\bm{E}$. For all our models and datasets, following \citet{huang2019IBPsymbols}, we train models with a single convolutional layer, an embedding size of $150$, a hidden size of $100$ and a kernel size of $5$ for the SST-2 dataset and $10$ for the rest of datasets. Following the setup used in \citet{andriushchenko2020GradAlign} for adversarial training, we use the SGD optimizer with batch size $128$ and a $30$-epoch cyclic learning rate scheduler with a maximum value of $100.0$, which we select via a grid search in a validation dataset, see \cref{subsec:lr_selection}. For every experiment, we report the average results over three random seeds and report the performance over the first $1,000$ samples of the test set. Our standard deviations are reported in \cref{subsec:stds}. Due to the extreme time costs of the brute-force and IBP approaches in the Fake-News dataset, we reduce to $50$ samples in this dataset. \rebuttal{We compute the adversarial accuracy with Charmer \citep{abadrocamora2024charmer}.} For completeness, we report the performance of \methodname{} over the full $1,000$ samples and $k$ up to $10$ in \cref{subsec:fake-news_more}. All of our experiments are conducted in a single machine with an NVIDIA A100 SXM4 40 GB GPU. Our implementation is available in \href{https://github.com/LIONS-EPFL/LipsLev}{github.com/LIONS-EPFL/LipsLev}.

\begin{table}[h!]
	\centering
	\caption{\textbf{Verified accuracy %
			under bounded $d_{\text{lev}}$:} We report the Clean accuracy (Acc.)\rebuttal{, Adversarial Accuracy (Adv. Acc.) with Charmer \citep{abadrocamora2024charmer}}, Verified accuracy (Ver.) and the average runtime in seconds (Time) for the brute-force approach (BruteF), IBP \citep{huang2019IBPsymbols} and \methodname{}. %
		\textcolor{tabred}{\textbf{OOT}} means the experiment was Out Of Time. \textcolor{tabred}{\xmark} means the method does not support $d_{\text{lev}}>1$. Our method, \methodname{}, is the only method able to provide non-trivial verified accuracies for any $k$ in a single forward pass. }
	\setlength{\tabcolsep}{1\tabcolsep}
	\renewcommand{\arraystretch}{0.8}
	\resizebox{0.97\textwidth}{!}{
		\begin{tabular}{llll|cc|cccccc}
			\toprule
			\multirow{2}{*}{Dataset} & \multirow{2}{*}{$p$} & \multirow{2}{*}{$k$} & \multirow{2}{*}{Acc.($\%$)} & \multicolumn{2}{c}{\rebuttal{Charmer}} & \multicolumn{2}{c}{BruteF} & \multicolumn{2}{c}{IBP} & \multicolumn{2}{c}{\methodname{}} \\
			& & & & \rebuttal{Adv. Acc.($\%$)} & \rebuttal{Time(s)} & Ver.($\%$) & Time(s) & Ver.($\%$) & Time(s) &Ver.($\%$) & Time(s)\\
			\midrule
			\multirow{6}{*}{AG-News} & \cellcolor{tabblue!20}&  $1$ & \multirow{2}{*}{$65.23$} & \rebuttal{$47.90$} & \rebuttal{$5.70$} & $47.87$ & $16.15$ & $27.77$ & $16.76$ & $32.33$ & $0.0015$ \\
			& \multirow{-2}{*}{\cellcolor{tabblue!20}$\infty$} & $2$ & & \rebuttal{$32.97$} & \rebuttal{$5.70$} & \multicolumn{2}{c}{\textcolor{tabred}{\textbf{OOT}}} & \multicolumn{2}{c}{\textcolor{tabred}{\textbf{\xmark}}} & $11.60$ & $0.0015$\\
			\cmidrule{2-12}
			& \cellcolor{tabred!20}& $1$ & \multirow{2}{*}{$69.63$} & \rebuttal{$54.47$} & \rebuttal{$5.43$} & $54.43$ & $15.33$ & $18.93$ & $17.56$ & $34.50$ & $0.00140$ \\
			& \multirow{-2}{*}{\cellcolor{tabred!20}$1$}& $2$ & & \rebuttal{$37.77$} & \rebuttal{$5.43$} & \multicolumn{2}{c}{\textcolor{tabred}{\textbf{OOT}}} & \multicolumn{2}{c}{\textcolor{tabred}{\textbf{\xmark}}} & $12.53$ & $0.00140$\\
			\cmidrule{2-12}
			& \cellcolor{tabgreen!20} & $1$ & \multirow{2}{*}{$74.80$} & \rebuttal{$62.20$} & \rebuttal{$7.32$} & $62.07$ & $29.12$ & $29.10$ & $31.54$ & $38.80$ & $0.00970$ \\
			& \multirow{-2}{*}{\cellcolor{tabgreen!20}$2$} & $2$ & & \rebuttal{$46.47$}  & \rebuttal{$7.32$} & \multicolumn{2}{c}{\textcolor{tabred}{\textbf{OOT}}} & \multicolumn{2}{c}{\textcolor{tabred}{\textbf{\xmark}}} & $13.93$ & $0.00970$\\
			\midrule
			\multirow{6}{*}{SST-2} & \cellcolor{tabblue!20} & $1$ & \multirow{2}{*}{$63.95$} & \rebuttal{$39.68$} & \rebuttal{$1.84$} & $39.68$ & $2.27$ & $33.94$ & $2.88$ & $14.68$ & $0.00084$ \\
			& \multirow{-2}{*}{\cellcolor{tabblue!20}$\infty$}& $2$ & & \rebuttal{$19.92$} & \rebuttal{$1.84$} & \multicolumn{2}{c}{\textcolor{tabred}{\textbf{OOT}}} & \multicolumn{2}{c}{\textcolor{tabred}{\textbf{\xmark}}} & $0.99$ & $0.00084$ \\
			\cmidrule{2-12}
			& \cellcolor{tabred!20} & $1$ & \multirow{2}{*}{$69.69$} & \rebuttal{$45.26$} & \rebuttal{$1.91$} & $45.22$ & $2.31$ & $19.00$ & $2.99$ & $18.69$ & $0.0022$ \\
			& \multirow{-2}{*}{\cellcolor{tabred!20}$1$}& $2$ & & \rebuttal{$26.11$} & \rebuttal{$1.91$} & \multicolumn{2}{c}{\textcolor{tabred}{\textbf{OOT}}} & \multicolumn{2}{c}{\textcolor{tabred}{\textbf{\xmark}}} & $1.83$ & $0.0022$ \\
			\cmidrule{2-12}
			& \cellcolor{tabgreen!20} & $1$ & \multirow{2}{*}{$69.95$} & \rebuttal{$48.81$} & \rebuttal{$2.09$} & $48.78$ & $4.23$ & $16.06$ & $5.22$ & $14.57$ & $0.0047$ \\
			& \multirow{-2}{*}{\cellcolor{tabgreen!20}$2$}& $2$ & & \rebuttal{$30.70$} & \rebuttal{$2.09$} & \multicolumn{2}{c}{\textcolor{tabred}{\textbf{OOT}}} & \multicolumn{2}{c}{\textcolor{tabred}{\textbf{\xmark}}} & $0.73$ & $0.0047$ \\
			\midrule
			\multirow{6}{*}{Fake-News} & \cellcolor{tabblue!20} & $1$ & \multirow{2}{*}{$100.00$} & \rebuttal{$86.67$} & \rebuttal{$66.82$} & $86.67$ & $972.46$ & $85.33$ & $989.84$ & $85.33$ & $0.017$ \\
			& \multirow{-2}{*}{\cellcolor{tabblue!20}$\infty$}& $2$ & & \rebuttal{$76.00$} & \rebuttal{$66.82$} & \multicolumn{2}{c}{\textcolor{tabred}{\textbf{OOT}}} & \multicolumn{2}{c}{\textcolor{tabred}{\textbf{\xmark}}} & $68.67$ & $0.017$ \\
			\cmidrule{2-12}
			& \cellcolor{tabred!20}& $1$ & \multirow{2}{*}{$98.00$} & \rebuttal{$92.00$} & \rebuttal{$67.11$} & $92.00$ & $978.94$ & $91.33$ & $990.32$ & $91.33$ & $0.014$\\
			& \multirow{-2}{*}{\cellcolor{tabred!20}$1$}& $2$ & & \rebuttal{$79.33$} & \rebuttal{$67.11$} & \multicolumn{2}{c}{\textcolor{tabred}{\textbf{OOT}}} & \multicolumn{2}{c}{\textcolor{tabred}{\textbf{\xmark}}} & $75.33$ & $0.014$ \\
			\cmidrule{2-12}
			& \cellcolor{tabgreen!20}& $1$ & \multirow{2}{*}{$98.00$} & \rebuttal{$88.67$} & \rebuttal{$73.52$} & $88.67$ & $1224.45$ & $87.33$ & $1466.38$ & $87.33$ & $0.0089$ \\
			&\multirow{-2}{*}{\cellcolor{tabgreen!20}$2$}& $2$ & & \rebuttal{$78.00$} & \rebuttal{$73.52$} & \multicolumn{2}{c}{\textcolor{tabred}{\textbf{OOT}}} & \multicolumn{2}{c}{\textcolor{tabred}{\textbf{\xmark}}} & $71.33$ & $0.0089$ \\
			\midrule
			\multirow{6}{*}{IMDB} &\cellcolor{tabblue!20}& $1$ & \multirow{2}{*}{$74.57$} & \rebuttal{$67.43$} & \rebuttal{$14.16$} & $67.43$ & $130.49$ &$61.50$ & $138.20$ & $31.37$ & $0.0047$\\
			& \multirow{-2}{*}{\cellcolor{tabblue!20}$\infty$} & $2$ & & \rebuttal{$59.77$} & \rebuttal{$14.16$} & \multicolumn{2}{c}{\textcolor{tabred}{\textbf{OOT}}} & \multicolumn{2}{c}{\textcolor{tabred}{\textbf{\xmark}}} & $5.90$ & $0.0047$\\
			\cmidrule{2-12}
			& \cellcolor{tabred!20} & $1$ & \multirow{2}{*}{$69.57$} & \rebuttal{$61.17$} & \rebuttal{$14.44$} & $61.00$ & $134.23$ & $47.30$ & $135.22$ & $28.73$ & $0.0027$ \\
			& \multirow{-2}{*}{\cellcolor{tabred!20}$1$} & $2$ & & \rebuttal{$52.20$} & \rebuttal{$14.44$} & \multicolumn{2}{c}{\textcolor{tabred}{\textbf{OOT}}} & \multicolumn{2}{c}{\textcolor{tabred}{\textbf{\xmark}}} & $6.80$ & $0.0027$ \\
			\cmidrule{2-12}
			& \cellcolor{tabgreen!20} & $1$ & \multirow{2}{*}{$60.60$} & \rebuttal{$46.87$} & \rebuttal{$16.24$} & $46.73$ & $261.99$ & $37.57$ & $308.73$ & $8.67$ & $0.0019$ \\
			& \multirow{-2}{*}{\cellcolor{tabgreen!20}$2$} & $2$ & & \rebuttal{$35.10$} & \rebuttal{$16.24$} & \multicolumn{2}{c}{\textcolor{tabred}{\textbf{OOT}}} & \multicolumn{2}{c}{\textcolor{tabred}{\textbf{\xmark}}} & $0.87$ & $0.0019$ \\
			\bottomrule
		\end{tabular}
	}
	\label{tab:main_table}
\end{table}

\subsection{Comparison with IBP and Brute Force approaches}
\label{subsec:comparison_ibp_bruteforce}

In this section, we compare our verification method against a brute-force approach and a modification of the IBP method in \citep{huang2019IBPsymbols} to handle insertions and deletions of characters. 

With the brute-force approach, for every sentence $\bm{P}$ in the test dataset, we evaluate our model in every sentence in the set $\{\bm{Q}: d_{\text{lev}}(\bm{P}, \bm{Q}) \leq k\}$ and check if there is any missclassification. Since the size of this set grows exponentially with $k$%
, we only evaluate the brute-force accuracy for $k=1$. 

In the case of IBP, we evaluate the classifier up to the pooling layer in every sentence of $\{\bm{Q}: d_{\text{lev}}(\bm{P}, \bm{Q}) \leq k\}$ and then build the overapproximation. In \citep{huang2019IBPsymbols} it was enough to build this overapproximation for $k=1$ and re-scale it to capture larger $k$s. This is not the case for insertions and deletions, this constrains IBP with Levenshtein distance specifications to work only for $k=1$. Overall, the complexity of IBP is the same as the brute-force approach without providing the exact robust accuracy. Because \citet{huang2019IBPsymbols} only considered perturbations of characters nearby in the English keyboard, the maximum perturbation size at $k=1$ was very small, e.g., $206$ and $722$ sentences for SST-2 and AG-News respectively\footnote{See Table 3 in \citet{huang2019IBPsymbols}}. In our setup, the maximum perturbation sizes are $33,742$ and $85,686$. This makes it impractical to perform IBP verified training. We train 3 models for each dataset and $p \in \{1,2,\infty\}$ and verify them with the three methods. We report the average time to verify and the clean\rebuttal{, adversarial} and verified accuracies at $k \in \{1,2\}$.

In \cref{tab:main_table}, we can observe that the $p$ value has a big influence in the clean accuracy of the models and the verification capability of each method. With $p=2$, we observe the highest clean accuracy AG-News and SST-2, with an average of $74.80\%$ and $69.95\%$ respectively. In the case of Fake-News and IMDB, $p=\infty$ provided the best accuracy with $100\%$ and $74.57\%$ respectively. In terms of robust accuracy (BruteF), $p=2$ also provides the best performance with $62.07\%$ for AG-News and $48.78\%$ for SST-2, while for Fake-News and IMDB, the best performance was achieved with $p=1$ and $p=\infty$ respectively ($92\%$ and $74.57\%$). We observe that IBP obtains the best ratio between clean and verified accuracy when employing $p=\infty$, providing the best performance in the IMDB and SST-2 datasets at $k=1$. Our method, \methodname{}, is able to \rebuttal{improve over IBP} in AG-News and match IBP in Fake-News at $k=1$, being the only method able to verify for $k>1$. \rebuttal{At distance $k=2$, we can observe that the Charmer adversarial accuracy in AG-News, SST-2 and IMDB is significantly larger than the verified accuracy given by \methodname{}. Contrarily, for the Fake-News dataset, \methodname{} is able to have a gap as close as $75.33\%$ v.s. $79.33\%$ with $p=1$.}

In terms of runtime, our method is from $4$ to $7$ orders of magnitude faster than brute-force and IBP, which attain similar runtimes. The impossibility of IBP to verify for $k>1$ and its larger runtime than brute-force, poses it as an impractical tool for Levenstein distance verification. Our method is the only one able to verify for $k>1$, with $13.93\%$ verified accuracy for AG-News, $1.83\%$ for SST-2, $75.33\%$ for Fake-News and $6.80\%$ for IMDB at $k=2$.

\subsection{Regularizing the \lips{} constant}
\label{subsec:exp_ablations}

In \cref{subsec:1_lips} we describe how to enforce our convolutional classifier to be 1-\lips{}. But, is there a better way of improving the final verified accuracy of our models? Because our \lips{} constant estimate in \cref{theo:lips_conv} its differentiable with respect to the parameters of the model, we can regularize this quantity during training in order to achieve a lower \lips{} constant and hopefully a better verified accuracy. In practice we regularize $G = M(\bm{W})\cdot M(\bm{\mathcal{K}}^{(1)})\cdot M(\bm{E})$ as defined in \cref{theo:lips_conv,theo:lips_conv_l}.

We train single-layer models with a regularization parameter of $\lambda \in \{0,0.001, 0.01, 0.1\}$, where $\lambda=0$ is equivalent to standard training. We initialize the weights of each layer so that their \lips{} constant is $1$. We use a learning rate of $0.01$. We measure the final \lips{} constant of each model and their clean and verified accuracies in a validation set of $1,000$ samples left out from the training set. As a baseline, we report these metrics for the models trained with the formulation in \cref{eq:conv_model_l}. 
\begin{table}[t]
\centering

\caption{\textbf{Regularizing v.s. enforcing \lips{}ness in SST-2:} We compare the performance when regularizing the \lips{} constant ($G$) during training with $\lambda \in \{0, 0.001, 0.01, 0.1\}$, against enforcing 1-\lips{}ness through \cref{eq:conv_model_l}. Regularizing $G$ leads to either models with similar performance to a constant classifier ($55.7\%$ for SST-2), or more accurate but non-verifiable models than when using the formulation in \cref{eq:conv_model_l}.}
\renewcommand{\arraystretch}{1.2}
\resizebox{\textwidth}{!}{
\begin{tabular}{lccccccccc}
\toprule
& \multicolumn{3}{c}{\cellcolor{tabblue!20}$p=\infty$} & \multicolumn{3}{c}{\cellcolor{tabred!20}$p=1$} & \multicolumn{3}{c}{\cellcolor{tabgreen!20}$p=2$} \\
$\lambda$ & Clean($\%$) & Ver.($\%$)  & $G$ & Clean($\%$) & Ver.($\%$)  & $G$ & Clean($\%$)  & Ver.($\%$) & $G$ \\
\midrule
$0$ & $89.0_{\pm(0.5)}$ & $0.0_{\pm(0.0)}$ & $2850.2_{\pm(80.1)}$ & $86.1_{\pm(0.4)}$ & $0.0_{\pm(0.0)}$ & $449.6_{\pm(3.0)}$ & $87.2_{\pm(0.2)}$ & $0.0_{\pm(0.0)}$ & $129.1_{\pm(2.9)}$ \\
$0.001$ & $80.8_{\pm(0.6)}$ & $0.0_{\pm(0.0)}$ & $65.0_{\pm(0.9)}$ & $84.5_{\pm(0.5)}$ & $0.0_{\pm(0.0)}$ & $44.1_{\pm(0.7)}$ & $86.2_{\pm(0.4)}$ & $0.0_{\pm(0.0)}$ & $37.7_{\pm(0.3)}$ \\
$0.01$ & $60.1_{\pm(1.1)}$ & $1.7_{\pm(0.1)}$ & $1.4_{\pm(0.1)}$ & $79.7_{\pm(0.5)}$ & $0.1_{\pm(0.0)}$ & $6.9_{\pm(0.0)}$ & $81.6_{\pm(0.4)}$ & $0.1_{\pm(0.0)}$ & $8.8_{\pm(0.1)}$ \\
$0.1$& $56.2_{\pm(0.0)}$ & $55.7_{\pm(0.3)}$ & $0.0_{\pm(0.0)}$ & $57.3_{\pm(0.0)}$ & $53.2_{\pm(0.8)}$ & $0.1_{\pm(0.0)}$ & $57.5_{\pm(0.9)}$ & $34.1_{\pm(3.0)}$ & $0.1_{\pm(0.0)}$ \\
\midrule
\cref{eq:conv_model_l} & $62.8_{\pm(0.6)}$ & $7.3_{\pm(0.1)}$ & $1.00_{\pm(0.0)}$ & $65.6_{\pm(0.1)}$ & $10.7_{\pm(0.2)}$ & $1.00_{\pm(0.0)}$ & $66.6_{\pm(0.6)}$ & $7.2_{\pm(0.1)}$ & $1.00_{\pm(0.0)}$ \\
\bottomrule
\end{tabular}}
\vspace{-10pt}
\label{tab:regularization}
\end{table}

In  \cref{tab:regularization} we observe that for all the studied norms, when regularizing the \lips{} constant $G$, we cannot easily match the performance when using \cref{eq:conv_model_l}. Regularized models converge to either close-to-constant classifiers ($55.7\%$ clean accuracy for SST-2) or present a close-to-zero verified accuracy, \rebuttal{This behavior has also been observed practically and theoretically for $\ell_p$ spaces \citep{zhang2022rethinking}}. The formulation in \cref{eq:conv_model_l} allows us to obtain verifiable models without the need to tune hyperparameters.

\subsection{The influence of sentence length in verification}
\label{subsec:length}

In this section we study the qualitative characteristics of a sentence leading to better verification properties, specifically, we study the influence of the sentence length in verification. We compute \rebuttal{the verified accuracy v.s. sentence length} at $k=1$ for the models in \cref{subsec:comparison_ibp_bruteforce} with \methodname{} and $p=2$. \rebuttal{For the AG-News we remove the outlier sentences with length larger than $600$ characters. The full length distribution is displayed in \cref{app:experiments}.}

\begin{figure}
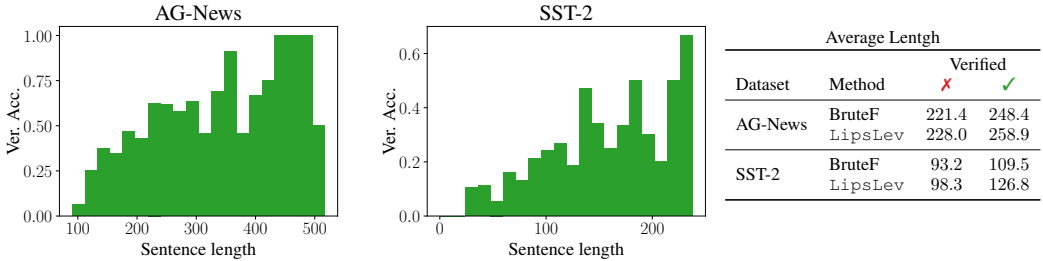

    \centering
    \resizebox{0.35\textwidth}{!}{\input{figures/length_stats_v2AG-Newsours.pgf}}\resizebox{0.35\textwidth}{!}{\input{figures/length_stats_v2SST-2ours.pgf}}\raisebox{60pt}{\resizebox{0.3\textwidth}{!}{\begin{tabular}{llcc}
        \multicolumn{4}{c}{Average Lentgh}\\
        \toprule
        & & \multicolumn{2}{c}{Verified}\\
        Dataset & Method  & \textcolor{tabred}{\xmark} & \textcolor{tabgreen}{\cmark}\\
        \midrule
        \multirow{2}{*}{AG-News} & BruteF & $221.4$ & $248.4$\\ 
        & \methodname{} & $228.0$ & $258.9$\\
        \midrule
        \multirow{2}{*}{SST-2} & BruteF & $93.2$ & $109.5$\\ 
        & \methodname{} & $98.3$ & $126.8$\\ 
        \bottomrule
    \end{tabular}}}
    \vspace{-20pt}
    \caption{\textbf{Sentence length distribution for verified and not verified sentences:} We report \rebuttal{the verified accuracy v.s. sentence length}  with \methodname{} \rebuttal{(\emph{Left})}\rebuttal{, and the average length of verified and not verified sentences (\emph{Right}) at $k=1$ in the models trained with $p=2$}. Shorter sentences are harder to verify in both SST-2 and AG-News with both \methodname{} and the brute-force approach.}
    \label{fig:length}
\end{figure}

In \cref{fig:length} we can observe that for both verification methods on both datasets, the verified sentences present a larger average length. \rebuttal{Additionally, there is a clear increasing tendency in the verified accuracy v.s. sentence length.} We believe this is reasonable as single characters perturbations are likely to introduce a smaller semantic change for longer sequences.

\section{Conclusion}
\label{sec:conclusion}

In this work, we propose the first approach able to verify NLP classifiers using the Levenshtein distance constraints. Our approach is based on an upper bound of the \lips{} constant of convolutional classifiers with respect to the Levenshtein distance. Our method, \methodname{} is able to obtain verified accuracies at any distance $k$ with single forward pass per sample. Moreover, our method is the only existing method that can practically verify for Levenshtein distances larger than $k=1$. We expect our work can inspire a new line of works on verifying larger distances and more broadly verifying additional classes of NLP classifiers. We will make the code publicly available upon the publication of this work, our implementation is attached with this submission.

\textbf{Future directions and challenges:} A problem shared with verification methods in the image domain is scalability \citep{wang2021beta}. Scaling verification methods to production models is a challenge, that becomes more relevant with the deployment of Large Language Models and their recently discovered vulnerabilities \citep{zou2023universal}. Even though our method is the first to practically provide Levenshtein distance certificates in NLP, neither the formulation of \citet{huang2019IBPsymbols} or our formulation covers modern architectures as Transformers \citep{vaswani2017transformer}. We highlight the main challenges as follows:%
\begin{itemize}
    \item[i)] \textbf{Tokenizers:} Modern Transformer-based classifiers utilize popular tokenizers such as SentencePiece \citep{kudo2018sentencepiece}, which aggregate contiguous characters in tokens before feeding them to the model. In order to deal with such non-differentiable piece, methods for computing the \lips{} constant of tokenizers are needed.
    \item[ii)] \textbf{Poor performance on character-level tasks:} In the case no tokenizer is used, transformers are known to fail in character-level classification tasks like the IMDB classification problem of Long Range Arena \citep{tay2021long}. 
    \item[iii)] \textbf{Non-\lips{}ness of Transformers:} Transformers are known to have a non-bounded \lips{} constant \citep{kim2021lipschitz}. In the image domain, verification methods modify the model to be \lips{} \citep{qi2023lipsformer,bonaert2021deept} or compute local \lips{} constants \citep{havens2024finegrained}. Nevertheless, it is non-trivial to extend such approaches from $\ell_p$-induced distances to the Levenshtein distance.
\end{itemize}
Our work sets the mathematical foundations of \lips{} verification in NLP, opens the door to addressing these challenges and to achieving verifiable architectures beyond convolutional models.

\section*{Acknowledgements}

Elias is thankful to Adrian Luis Müller and Ioannis Mavrothalassitis for the amazing discussions in the whiteboard of the Lab. Authors acknowledge the constructive feedback of reviewers and the work of ICLR'25 program and area chairs. We thank \href{https://zulip.com}{Zulip} for their project organization tool. ARO - Research was sponsored by the Army Research Office and was accomplished under Grant Number W911NF-24-1-0048.
Hasler AI - This work was supported by Hasler Foundation Program: Hasler Responsible AI (project number 21043).
SNF project – Deep Optimisation - This work was supported by the Swiss National Science Foundation (SNSF) under grant number 200021\_205011.

\section*{Broader impact}
In this work, we tackle the important problem of verifying the robustness of NLP models against adversarial attacks. By advancing in this area, we can positively impact society by ensuring NLP models deployed in safety critical applications are robust to such perturbations.

\bibliography{main.bib}
\bibliographystyle{plainnat}

\appendix
\clearpage
\renewcommand{\thefigure}{S\arabic{figure}} %
\renewcommand{\thetable}{S\arabic{table}} %
\renewcommand{\thetheorem}{S\arabic{theorem}} %

\section{Additional experimental validation}
\label{app:experiments}

\rebuttal{In \cref{subsec:metrics} we present the definition of the performance metrics employed in this work.} In \cref{subsec:lr_selection} we present our grid search for selecting the best learning rate for each dataset and $p$ value in the ERP distance \cref{def:erp_dist}. In \cref{subsec:exp_layers} we evaluate the effect of training with a different number of layers and report their latencies.

\subsection{\rebuttal{Definition of performance metrics}}
\label{subsec:metrics}

\rebuttal{In this section we define the key metrics used to evaluate our models and verification methods. Let $\ind{}$ be the indicator function, given a classification model $\bm{f}: \mathcal{S}(\Gamma) \to \realnum^{o}$ assigning scores to each of the $o$ classes and a dataset $\mathcal{D}=\{\bm{S}^{(i)}, y^{(i)}\}_{i=1}^{n}$ with $\bm{S}^{(i)}\in \mathcal{S}(\Gamma)$ and $y^{(i)}\in [o]$, the clean, adversarial and verified accuracy are defined as:}

\rebuttal{\begin{definition}[Clean accuracy]
    The clean accuracy is a percentage in $[0,100]$ that is computed as:
    \[
    \text{Acc.}(\bm{f},\mathcal{D})~=\frac{100}{n}\sum_{i=1}^{n}\ind{y^{(i)} = \argmax_{j\in [o]}f(\bm{S}^{(i)})_{j}}\,.
    \]
\end{definition}}
\rebuttal{\begin{definition}[Adversarial accuracy]
    Given an adversary $\mathbf{A} : \mathcal{S}(\Gamma) \to \mathcal{S}(\Gamma)$ that perturbs a sentence\footnote{\rebuttal{The adversary will usually adhere to some constraints such as $d_{\text{Lev}}(\bm{S}, \bm{A}(\bm{S})) \leq k$.}}. The adversarial accuracy is a percentage in $[0,100]$ that is computed as:
    \[
    \text{Adv. Acc.}(\bm{f},\mathcal{D}, \bm{A})~=\frac{100}{n}\sum_{i=1}^{n}\ind{y^{(i)} = \argmax_{j\in [o]}f(\bm{A}(\bm{S}^{(i)}))_{j}}\,.
    \]
\end{definition}}

\rebuttal{\begin{definition}[Verified accuracy]
    Given a verification method $v$ returning the certified radius (see \cref{subsec:lips_for_text}) for a given model $\bm{f}$ and sample $(\bm{S},y)$ as $v(\bm{f},\bm{S},y) \in \{0\} \cup \mathbb{N}$. The verified accuracy at distance $k$ is a percentage in $[0,100]$ that is computed as:
    \[
    \text{Ver. Acc.}(\bm{f},\mathcal{D}, v, k)~=\frac{100}{n}\sum_{i=1}^{n}\ind{v(\bm{f},\bm{S}^{(i)}, y^{(i)}) \geq k}\,.
    \]
\end{definition}}
\rebuttal{For simplicity, the arguments of each accuracy function are omitted in the text as they can be inferred from the context.}

\subsection{\rebuttal{Sentence length distribution}}

\rebuttal{In this section, we provide additional details about the sequence length distribution of verified and not verified sentences. Specifically, in \cref{fig:length_app} we provide the full distribution of lengths from the experiment in \cref{fig:length}.}

\begin{figure}
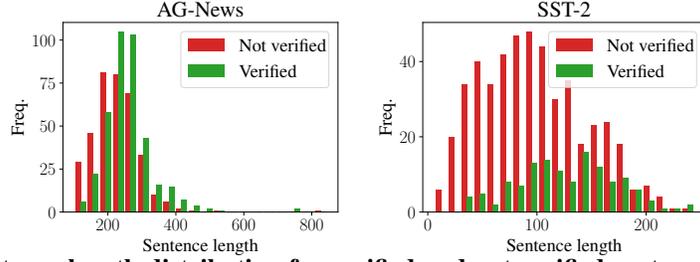

    \centering
    \resizebox{0.35\textwidth}{!}{\input{figures/length_statsAG-Newsours.pgf}}\resizebox{0.35\textwidth}{!}{\input{figures/length_statsSST-2ours.pgf}}
    \vspace{-20pt}
    \caption{\rebuttal{\textbf{Sentence length distribution for verified and not verified sentences:} We report the histogram of the lengths for verified and not verified sentences at $k=1$ with \methodname{} in the models trained with $p=2$. Shorter sentences are harder to verify in both SST-2 and AG-News with both \methodname{} and the brute force approach.}}
    \label{fig:length_app}
\end{figure}

\subsection{Standard deviations}
\label{subsec:stds}
In \cref{tab:std_main_table} we report the results from \cref{tab:main_table} with standard deviations for completeness.
\begin{table}[h!]
    \centering
    \caption{\textbf{Verified accuracy %
    under bounded $d_{\text{lev}}$:} We report the Clean accuracy (Acc.)\rebuttal{, Adversarial Accuracy (Adv. Acc.) with Charmer \citep{abadrocamora2024charmer}}, Verified accuracy (Ver.) and the average runtime in seconds (Time) for the brute-force approach (BruteF), IBP \citep{huang2019IBPsymbols} and \methodname{}. %
    \textcolor{tabred}{\textbf{OOT}} means the experiment was Out Of Time. \textcolor{tabred}{\xmark} means the method does not support $d_{\text{lev}}>1$. Our method, \methodname{}, is the only method able to provide non-trivial verified accuracies for any $k$ in a single forward pass. }
    \setlength{\tabcolsep}{0.5\tabcolsep}
    \renewcommand{\arraystretch}{0.9}
    \resizebox{0.97\textwidth}{!}{
    \begin{tabular}{llll|cc|cccccc}
        \toprule
        \multirow{2}{*}{Dataset} & \multirow{2}{*}{$p$} & \multirow{2}{*}{$k$} & \multirow{2}{*}{Acc.($\%$)} & \multicolumn{2}{c}{\rebuttal{Charmer}} & \multicolumn{2}{c}{BruteF} & \multicolumn{2}{c}{IBP} & \multicolumn{2}{c}{\methodname{}} \\
	& & & & \rebuttal{Adv. Acc.($\%$)} & \rebuttal{Time(s)} & Ver.($\%$) & Time(s) & Ver.($\%$) & Time(s) &Ver.($\%$) & Time(s)\\
        \midrule
        \multirow{6}{*}{AG-News} & \cellcolor{tabblue!20}&  $1$ & \multirow{2}{*}{$65.23_{\pm (0.12)}$} & \rebuttal{$47.90_{\pm (0.08)}$} & \rebuttal{$5.70_{\pm (0.03)}$} & $47.87_{\pm (0.09)}$ & $16.15_{\pm (0.23)}$ & $27.77_{\pm (0.12)}$ & $16.76_{\pm (0.26)}$ & $32.33_{\pm (0.31)}$ & $0.0015_{\pm (0.00033)}$ \\
        & \multirow{-2}{*}{\cellcolor{tabblue!20}$\infty$} & $2$ & & \rebuttal{$32.97_{\pm (0.38)}$} & \rebuttal{$5.70_{\pm (0.03)}$} & \multicolumn{2}{c}{\textcolor{tabred}{\textbf{OOT}}} & \multicolumn{2}{c}{\textcolor{tabred}{\textbf{\xmark}}} & $11.60_{\pm (0.45)}$ & $0.0015_{\pm (0.00033)}$\\
        \cmidrule{2-12}
        & \cellcolor{tabred!20}& $1$ & \multirow{2}{*}{$69.63_{\pm (0.19)}$} & \rebuttal{$54.47_{\pm (0.49)}$} & \rebuttal{$5.43_{\pm (0.33)}$} & $54.43_{\pm (0.53)}$ & $15.33_{\pm (0.34)}$ & $18.93_{\pm (0.50)}$ & $17.56_{\pm (1.62)}$ & $34.50_{\pm (0.36)}$ & $0.00140_{\pm (0.00007)}$ \\
        & \multirow{-2}{*}{\cellcolor{tabred!20}$1$}& $2$ & & \rebuttal{$37.77_{\pm (0.46)}$} & \rebuttal{$5.43_{\pm (0.33)}$} & \multicolumn{2}{c}{\textcolor{tabred}{\textbf{OOT}}} & \multicolumn{2}{c}{\textcolor{tabred}{\textbf{\xmark}}} & $12.53_{\pm (0.29)}$ & $0.00140_{\pm (0.00007)}$\\
        \cmidrule{2-12}
        & \cellcolor{tabgreen!20} & $1$ & \multirow{2}{*}{$74.80_{\pm (0.45)}$} & \rebuttal{$62.20_{\pm (0.75)}$} & \rebuttal{$7.32_{\pm (0.54)}$} & $62.07_{\pm (0.82)}$ & $29.12_{\pm (1.88)}$ & $29.10_{\pm (0.45)}$ & $31.54_{\pm (0.55)}$ & $38.80_{\pm (0.29)}$ & $0.00970_{\pm (0.00044)}$ \\
        & \multirow{-2}{*}{\cellcolor{tabgreen!20}$2$} & $2$ & & \rebuttal{$46.47_{\pm (0.29)}$}  & \rebuttal{$7.32_{\pm (0.54)}$} & \multicolumn{2}{c}{\textcolor{tabred}{\textbf{OOT}}} & \multicolumn{2}{c}{\textcolor{tabred}{\textbf{\xmark}}} & $13.93_{\pm (0.21)}$ & $0.00970_{\pm (0.00044)}$\\
        \midrule
        \multirow{6}{*}{SST-2} & \cellcolor{tabblue!20} & $1$ & \multirow{2}{*}{$63.95_{\pm (0.30)}$} & \rebuttal{$39.68_{\pm (0.99)}$} & \rebuttal{$1.84_{\pm (0.05)}$} & $39.68_{\pm (0.99)}$ & $2.27_{\pm (0.079)}$ & $33.94_{\pm (1.11)}$ & $2.88_{\pm (0.092)}$ & $14.68_{\pm (0.25)}$ & $0.00084_{\pm (0.00024)}$ \\
        & \multirow{-2}{*}{\cellcolor{tabblue!20}$\infty$}& $2$ & & \rebuttal{$19.92_{\pm (1.16)}$} & \rebuttal{$1.84_{\pm (0.05)}$} & \multicolumn{2}{c}{\textcolor{tabred}{\textbf{OOT}}} & \multicolumn{2}{c}{\textcolor{tabred}{\textbf{\xmark}}} & $0.99_{\pm (0.05)}$ & $0.00084_{\pm (0.00024)}$ \\
        \cmidrule{2-12}
        & \cellcolor{tabred!20} & $1$ & \multirow{2}{*}{$69.69_{\pm (0.14)}$} & \rebuttal{$45.26_{\pm (0.20)}$} & \rebuttal{$1.91_{\pm (0.03)}$} & $45.22_{\pm (0.14)}$ & $2.31_{\pm (0.16)}$ & $19.00_{\pm (1.08)}$ & $2.99_{\pm (0.14)}$ & $18.69_{\pm (0.80)}$ & $0.0022_{\pm (0.0017)}$ \\
        & \multirow{-2}{*}{\cellcolor{tabred!20}$1$}& $2$ & & \rebuttal{$26.11_{\pm (0.55)}$} & \rebuttal{$1.91_{\pm (0.03)}$} & \multicolumn{2}{c}{\textcolor{tabred}{\textbf{OOT}}} & \multicolumn{2}{c}{\textcolor{tabred}{\textbf{\xmark}}} & $1.83_{\pm (0.00)}$ & $0.0022_{\pm (0.0017)}$ \\
        \cmidrule{2-12}
        & \cellcolor{tabgreen!20} & $1$ & \multirow{2}{*}{$69.95_{\pm (0.32)}$} & \rebuttal{$48.81_{\pm (0.42)}$} & \rebuttal{$2.09_{\pm (0.07)}$} & $48.78_{\pm (0.43)}$ & $4.23_{\pm (0.11)}$ & $16.06_{\pm (1.17)}$ & $5.22_{\pm (0.49)}$ & $14.57_{\pm (0.34)}$ & $0.0047_{\pm (0.0023)}$ \\
        & \multirow{-2}{*}{\cellcolor{tabgreen!20}$2$}& $2$ & & \rebuttal{$30.70_{\pm (0.81)}$} & \rebuttal{$2.09_{\pm (0.07)}$} & \multicolumn{2}{c}{\textcolor{tabred}{\textbf{OOT}}} & \multicolumn{2}{c}{\textcolor{tabred}{\textbf{\xmark}}} & $0.73_{\pm (0.27)}$ & $0.0047_{\pm (0.0023)}$ \\
        \midrule
        \multirow{6}{*}{Fake-News} & \cellcolor{tabblue!20} & $1$ & \multirow{2}{*}{$100.00_{\pm (0.00)}$} & \rebuttal{$86.67_{\pm (0.94)}$} & \rebuttal{$66.82_{\pm (1.98)}$} & $86.67_{\pm (0.94)}$ & $972.46_{\pm (8.15)}$ & $85.33_{\pm (0.94)}$ & $989.84_{\pm (8.40)}$ & $85.33_{\pm (0.94)}$ & $0.017_{\pm (0.0067)}$ \\
        & \multirow{-2}{*}{\cellcolor{tabblue!20}$\infty$}& $2$ & & \rebuttal{$76.00_{\pm (1.63)}$} & \rebuttal{$66.82_{\pm (1.98)}$} & \multicolumn{2}{c}{\textcolor{tabred}{\textbf{OOT}}} & \multicolumn{2}{c}{\textcolor{tabred}{\textbf{\xmark}}} & $68.67_{\pm (0.94)}$ & $0.017_{\pm (0.0067)}$ \\
        \cmidrule{2-12}
        & \cellcolor{tabred!20}& $1$ & \multirow{2}{*}{$98.00_{\pm (1.63)}$} & \rebuttal{$92.00_{\pm (0.00)}$} & \rebuttal{$67.11_{\pm (1.87)}$} & $92.00_{\pm (0.00)}$ & $978.94_{\pm (15.91)}$ & $91.33_{\pm (0.94)}$ & $990.32_{\pm (14.85)}$ & $91.33_{\pm (0.94)}$ & $0.014_{\pm (0.0056)}$\\
        & \multirow{-2}{*}{\cellcolor{tabred!20}$1$}& $2$ & & \rebuttal{$79.33_{\pm (2.49)}$} & \rebuttal{$67.11_{\pm (1.87)}$} & \multicolumn{2}{c}{\textcolor{tabred}{\textbf{OOT}}} & \multicolumn{2}{c}{\textcolor{tabred}{\textbf{\xmark}}} & $75.33_{\pm (3.40)}$ & $0.014_{\pm (0.0056)}$ \\
        \cmidrule{2-12}
        & \cellcolor{tabgreen!20}& $1$ & \multirow{2}{*}{$98.00_{\pm (1.63)}$} & \rebuttal{$88.67_{\pm (4.99)}$} & \rebuttal{$73.52_{\pm (2.77)}$} & $88.67_{\pm (4.99)}$ & $1224.45_{\pm (8.66)}$ & $87.33_{\pm (4.11)}$ & $1466.38_{\pm (294.21)}$ & $87.33_{\pm (6.80)}$ & $0.0089_{\pm (0.010)}$ \\
        &\multirow{-2}{*}{\cellcolor{tabgreen!20}$2$}& $2$ & & \rebuttal{$78.00_{\pm (4.32)}$} & \rebuttal{$73.52_{\pm (2.77)}$} & \multicolumn{2}{c}{\textcolor{tabred}{\textbf{OOT}}} & \multicolumn{2}{c}{\textcolor{tabred}{\textbf{\xmark}}} & $71.33_{\pm (5.25)}$ & $0.0089_{\pm (0.010)}$ \\
        \midrule
        \multirow{6}{*}{IMDB} &\cellcolor{tabblue!20}& $1$ & \multirow{2}{*}{$74.57_{\pm (5.22)}$} & \rebuttal{$67.43_{\pm (4.70)}$} & \rebuttal{$14.16_{\pm (0.40)}$} & $67.43_{\pm (4.70)}$ & $130.49_{\pm (3.38)}$ &$61.50_{\pm (4.73)}$ & $138.20_{\pm (6.12)}$ & $31.37_{\pm (4.54)}$ & $0.0047_{\pm (0.0015)}$\\
        & \multirow{-2}{*}{\cellcolor{tabblue!20}$\infty$} & $2$ & & \rebuttal{$59.77_{\pm (4.81)}$} & \rebuttal{$14.16_{\pm (0.40)}$} & \multicolumn{2}{c}{\textcolor{tabred}{\textbf{OOT}}} & \multicolumn{2}{c}{\textcolor{tabred}{\textbf{\xmark}}} & $5.90_{\pm (1.36)}$ & $0.0047_{\pm (0.0015)}$\\
        \cmidrule{2-12}
        & \cellcolor{tabred!20} & $1$ & \multirow{2}{*}{$69.57_{\pm (7.18)}$} & \rebuttal{$61.17_{\pm (8.92)}$} & \rebuttal{$14.44_{\pm (0.27)}$} & $61.00_{\pm (8.82)}$ & $134.23_{\pm (1.64)}$ & $47.30_{\pm (10.51)}$ & $135.22_{\pm (0.31)}$ & $28.73_{\pm (6.94)}$ & $0.0027_{\pm (0.0025)}$ \\
        & \multirow{-2}{*}{\cellcolor{tabred!20}$1$} & $2$ & & \rebuttal{$52.20_{\pm (9.33)}$} & \rebuttal{$14.44_{\pm (0.27)}$} & \multicolumn{2}{c}{\textcolor{tabred}{\textbf{OOT}}} & \multicolumn{2}{c}{\textcolor{tabred}{\textbf{\xmark}}} & $6.80_{\pm (2.16)}$ & $0.0027_{\pm (0.0025)}$ \\
        \cmidrule{2-12}
        & \cellcolor{tabgreen!20} & $1$ & \multirow{2}{*}{$60.60_{\pm (4.21)}$} & \rebuttal{$46.87_{\pm (0.62)}$} & \rebuttal{$16.24_{\pm (0.49)}$} & $46.73_{\pm (0.78)}$ & $261.99_{\pm (62.11)}$ & $37.57_{\pm (6.35)}$ & $308.73_{\pm (1.70)}$ & $8.67_{\pm (5.08)}$ & $0.0019_{\pm (0.0011)}$ \\
        & \multirow{-2}{*}{\cellcolor{tabgreen!20}$2$} & $2$ & & \rebuttal{$35.10_{\pm (3.36)}$} & \rebuttal{$16.24_{\pm (0.49)}$} & \multicolumn{2}{c}{\textcolor{tabred}{\textbf{OOT}}} & \multicolumn{2}{c}{\textcolor{tabred}{\textbf{\xmark}}} & $0.87_{\pm (0.66)}$ & $0.0019_{\pm (0.0011)}$ \\
        \bottomrule
    \end{tabular}
    }
    \label{tab:std_main_table}
\end{table}

\subsection{Larger distances for Fake-News}
\label{subsec:fake-news_more}
In \cref{subsec:comparison_ibp_bruteforce} we reported the performance in the Fake-News dataset over the first $50$ samples of the test set and only up to $k=2$. Nevertheless, the speed of \methodname{} allows for more samples and larger distances. In \cref{tab:fakenews_full_table}, we evaluate the performance of \methodname{} over the first $1,000$ test samples and up to $k=10$. 

\begin{table}[h!]
    \centering
    \caption{\textbf{\methodname{} verified accuracy in FakeNews over the first $1,000$ validation samples and up to $k=10$.} We observe our method is able to verify non-trivial accuracy with even up to 10 character changes.%
    }
    \setlength{\tabcolsep}{0.5\tabcolsep}
    \renewcommand{\arraystretch}{0.9}
    \resizebox{0.97\textwidth}{!}{
    \begin{tabular}{lc|ccccccccccc}
        \toprule
        \multirow{2}{*}{$p$} &\multirow{2}{*}{Clean Acc.} & \multicolumn{10}{c}{Verified Acc.}\\
        & & 1 & 2 & 3 & 4 & 5 & 6 & 7 & 8 & 9 & 10\\
        \midrule
        \cellcolor{tabblue!20} $\infty$ & $95.63_{\pm (0.19)} $& $87.50_{\pm (0.42)}$ & $72.43_{\pm (0.31)}$ & $49.37_{\pm (1.54)}$ & $31.03_{\pm (1.33)}$ & $20.03_{\pm (1.21)}$ & $14.07_{\pm (1.33)}$ & $9.77_{\pm (1.11)}$ & $6.67_{\pm (0.83)}$ & $5.00_{\pm (0.85)}$ & $3.60_{\pm (0.57)}$ \\
        \cellcolor{tabred!20} $1$ & $95.90_{\pm (0.16)}$ & $88.93_{\pm (1.72)}$ & $75.97_{\pm (2.91)}$ & $56.33_{\pm (4.24)}$ & $38.50_{\pm (4.82)}$ & $24.13_{\pm (4.14)}$ & $15.83_{\pm (3.79)}$ & $11.30_{\pm (3.40)}$ & $7.47_{\pm (2.40)}$ & $5.50_{\pm (2.01)}$ & $4.27_{\pm (1.54)}$\\
        \cellcolor{tabgreen!20} $2$ & $95.00_{\pm (0.92)}$ & $87.50_{\pm (2.70)}$ & $70.77_{\pm (7.90)}$ & $48.37_{\pm (11.71)}$ & $30.97_{\pm (10.06)}$ & $20.57_{\pm (6.44)}$ & $13.07_{\pm (4.89)}$ & $9.20_{\pm (4.27)}$ & $6.10_{\pm (2.73)}$ & $4.57_{\pm (2.02)}$ & $3.50_{\pm (1.71)}$ \\ 
        \bottomrule
    \end{tabular}
    }
    \label{tab:fakenews_full_table}
\end{table}

\subsection{Hyperparameter selection}
\label{subsec:lr_selection}

In order to select the best learning rate in each dataset and $p$ norm for the ERP distance, we compute the clean and verified accuracy at $k=1$ in a validation set of $1,000$ samples extracted from each training set. We test the learning rate values $\{0.1, 0.5,1,5,10,50,100,500,1000\}$. We train convolutional models with 1 convolutional layer and the standard embedding, hidden and kernel sizes in \cref{subsec:exp_setup}. We notice these large learning rates are needed due to the $1$-\lips{} formulation in \cref{eq:conv_model_l}.

\begin{figure}
    \centering
    \resizebox{\textwidth}{!}{\input{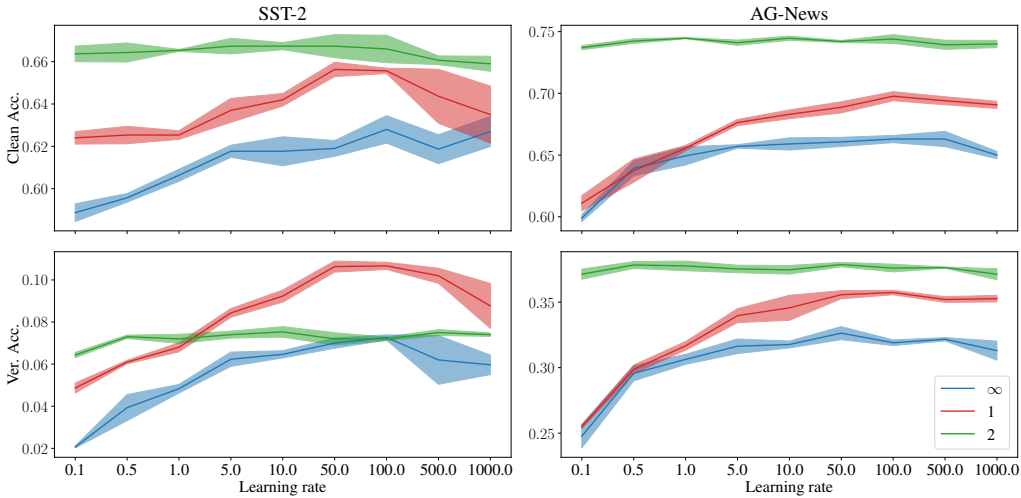}}
    \caption{\textbf{Learning rate selection for the SST-2 and AG-News datasets:} We report the clean and verified accuracy in a validation set of 1,000 sentences extracted from the training split of each dataset and set aside during training. We set the learning rate equal to $100$ in the rest of our experiments as it provides a good trade-off between clean and verified accuracy for all norms and datasets. %
    }
    \label{fig:lr_selection}
\end{figure}

Based on the results from \cref{fig:lr_selection}, we select $100$ as our learning rate for the rest of experiments in this work.

\subsection{Training deeper models}
\label{subsec:exp_layers}

In this section, we study the performance of models with more than one convolutional layer. We train with $1,2,3$ and $4$ convolutional layers with a hidden size of $100$ and a kernel size of $5$ and $10$ for SST-2 and AG-News respectively. We train the models with the 1-\lips{} formulation in \cref{eq:conv_model_l} with $p \in \{1,2,\infty \}$.

\begin{figure}
    \centering
    \resizebox{0.9\textwidth}{!}{\input{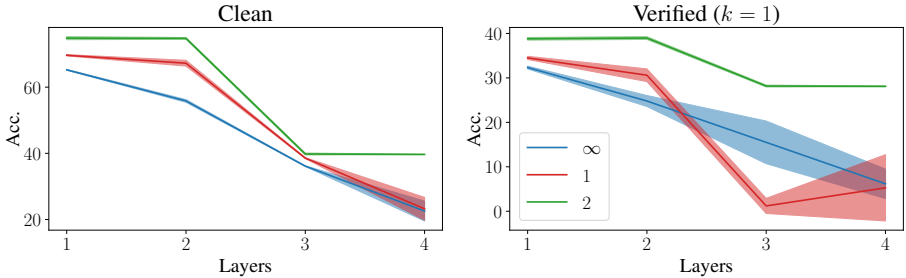}}
    \vspace{-10pt}
    \caption{\textbf{Training deeper models in AG-News:} We report the clean and verified accuracies with \methodname{} at $k=1$ for $p\in \{1,2,\infty\}$. Clean and verified accuracies decrease with the number of layers. With $p=2$ the performance is less degraded with the number of layers.%
    }
    \label{fig:layers_agnews}
\end{figure}

\begin{figure}
    \centering
    \resizebox{0.9\textwidth}{!}{\input{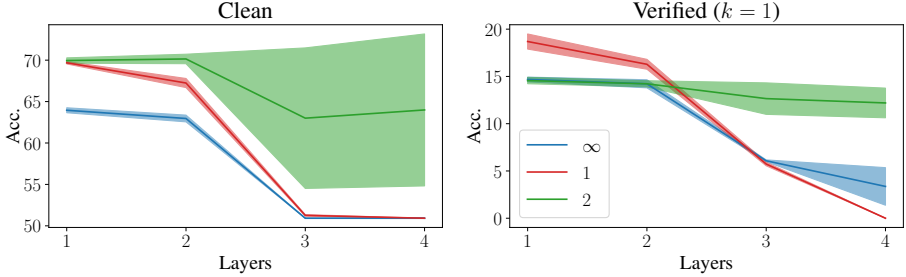}}
    \vspace{-10pt}
    \caption{\textbf{Training deeper models in SST-2:} We report the clean and verified accuracies with \methodname{} at $k=1$ for $p\in \{1,2,\infty\}$. Clean and verified accuracies decrease with the number of layers. With $p=2$ the performance is less degraded with the number of layers.
    }
    \label{fig:layers}
\end{figure}

In \cref{fig:layers_agnews,fig:layers} we can observe that increasing the number of layers degrades the clean and verified accuracy for every value of $p$. Nevertheless, for $p=2$, the effect is diminished. Jointly with the improved performance when using $p=2$ in \cref{subsec:comparison_ibp_bruteforce}, we advocate for its use in the ERP distance. \rebuttal{We believe this performance degradation is related to the gradient attenuation phenomenon \citep{li2019preventing}. It remains an open problem to avoid gradient attenuation in the case where the \lips{} constant of the ERP distance is enforced to be 1.} 

In \cref{tab:latencies} we can observe that our models have low latencies. Noticeably, with $p=2$ we observe a larger latency than with $p\in \{1,\infty\}$. This is due to the need to compute the espectral norm at each iteration. Nevertheless, this cost is still low and only incurred during training, as by rescaling the weights, we can any model in \cref{eq:conv_model_l} formulation as a model in \cref{eq:conv_model}.

\begin{table}[h!]
    \centering
    \caption{\textbf{Latency in seconds for different $p$ and number of layers.}}
    \begin{tabular}{l|cccc}
\toprule
& \multicolumn{4}{c}{Layers}\\
$p$ & 1 & 2 & 3 & 4 \\
\midrule
\cellcolor{tabred!20}$1$ & $0.0017_{\pm(0.0179)}$ & $0.0015_{\pm(0.0007)}$ & $0.0015_{\pm(0.0003)}$ & $0.0019_{\pm(0.0002)}$ \\
\cellcolor{tabgreen!20}$2$ & $0.0225_{\pm(0.0045)}$ & $0.0393_{\pm(0.0026)}$ & $0.0567_{\pm(0.0028)}$ & $0.0740_{\pm(0.0036)}$ \\
\cellcolor{tabblue!20}$\infty$ & $0.0007_{\pm(0.0000)}$ & $0.0011_{\pm(0.0000)}$ & $0.0015_{\pm(0.0000)}$ & $0.0019_{\pm(0.0000)}$ \\
\bottomrule
\end{tabular}
    \label{tab:latencies}
\end{table}

\subsection{Comparison Against Randomized Smoothing}
\label{subsec:randomized_smoothing}

In this section, we analyze the differences in methodology and results, between our approach and the Randomized Smoothing approach in \citet{huang2023rsdel}: RS-Del.

As an advantage, RS-Del can be employed with any base classifier $\bm{f}: \mathcal{S}(\Gamma) \to \realnum^{o}$ assigning scores to each of the $o$ classes, without restrictions on the architecture. Nevertheless, neither the classifier verified by RS-Del, nor the certified radius itself are deterministic. In Randomized Smoothing, given a classifier $\bm{f}$ and given a set of perturbations $\mathcal{P}(\bm{S})$, a smoothed classifier $\tilde{\bm{f}}(\bm{S}) = \underset{\tilde{\bm{S}} \sim \text{Unif.}(\mathcal{P}(\bm{S}))}{\mathbb{E}}\left[\bm{f}(\tilde{\bm{S}})\right]$ is constructed. 

In practice, multiple perturbations ($n_{\text{pred}}$) are sampled from $\mathcal{P}(\bm{S})$ and the prediction is estimated through Monte-Carlo as the average $\tilde{\bm{f}}(\bm{S}) \approx \sum_{i=1}^{n_{\text{pred}}}\bm{f}(\tilde{\bm{S}}_i)$, with $\tilde{\bm{S}}_i$ sampled i.i.d uniformly from $\mathcal{P}(\bm{S})$. Similarly, an $\alpha$-confidence lower bound of the margin is estimated with $n_{\text{bnd}}$ samples, which is then used to obtain an $\alpha$-confidence  estimate of the certified radius. Therefore, the certificates provided with Randomized Smoothing are probabilistic and regarding the smoothed classifier $\tilde{\bm{f}}$, not the original $\bm{f}$.

Contrarily, \methodname{} imposes the strong constraint of knowing an upper bound of the \lips{} constant of the margins of $\bm{f}$. However, this enables the direct, non-probabilistic, estimate of the certified radius in a single forward pass through $\bm{f}$. 

Having these differences in mind, we compare the performance of \methodname{} and RD-Del when verifying the models in \cref{tab:main_table}. To do so, we run RS-Del with the recommended $n_{\text{pred}} = 1,000$, $n_{\text{bnd}}=4,000$, $p_\text{del} = 0.9$ and $\alpha=0.05$.

\begin{table}[h!]
    \centering
    \caption{\textbf{Comparison against Randomized Smoothing:} We report the Clean accuracy (Acc.), Verified accuracy (Ver.) and the average runtime in seconds (Time) for \methodname{} and RS-Del \citep{huang2023rsdel}. %
    RS-Del reduces drops significantly the clean accuracy of the classifier. RS-Del improves the verified accuracy of \methodname{} in SST-2 for all $p$ and IMDB for $p=2$.
    }
    \setlength{\tabcolsep}{0.5\tabcolsep}
    \renewcommand{\arraystretch}{0.9}
    \resizebox{0.97\textwidth}{!}{
    \begin{tabular}{lll|ccc|ccc}
        \toprule
        \multirow{2}{*}{Dataset} & \multirow{2}{*}{$p$} & \multirow{2}{*}{$k$} & \multicolumn{3}{c}{\methodname{}} & \multicolumn{3}{c}{RS-Del \citep{huang2023rsdel}} \\
	& & & Acc.($\%$) & Ver.($\%$) & Time(s) & Acc.($\%$) & Ver.($\%$) & Time(s)\\
        \midrule
        \multirow{6}{*}{AG-News} & \cellcolor{tabblue!20}&  $1$ & \multirow{2}{*}{$65.23_{\pm (0.12)}$} & $32.33_{\pm (0.31)}$ & $0.0015_{\pm (0.00033)}$ & \multirow{2}{*}{$51.77_{\pm (0.17)}$} & $2.57_{\pm (0.41)}$ & $1.08_{\pm (0.06)}$ \\
        & \multirow{-2}{*}{\cellcolor{tabblue!20}$\infty$} & $2$ & &  $11.60_{\pm (0.45)}$ & $0.0015_{\pm (0.00033)}$ & & $0.27_{\pm (0.05)}$ & $1.08_{\pm (0.06)}$\\
        \cmidrule{2-9}
        & \cellcolor{tabred!20}& $1$ & \multirow{2}{*}{$69.63_{\pm (0.19)}$} & $34.50_{\pm (0.36)}$ & $0.00140_{\pm (0.00007)}$ & \multirow{2}{*}{$50.67_{\pm (0.90)}$} & $3.73_{\pm (0.19)}$ & $1.05_{\pm (0.05)}$ \\
        & \multirow{-2}{*}{\cellcolor{tabred!20}$1$}& $2$ & & $12.53_{\pm (0.29)}$ & $0.00140_{\pm (0.00007)}$ & & $0.80_{\pm (0.22)}$ & $1.05_{\pm (0.05)}$ \\
        \cmidrule{2-9}
        & \cellcolor{tabgreen!20} & $1$ & \multirow{2}{*}{$74.80_{\pm (0.45)}$} &  $38.80_{\pm (0.29)}$ & $0.00970_{\pm (0.00044)}$ & \multirow{2}{*}{$48.20_{\pm (0.37)}$} & $4.60_{\pm (0.41)}$ & $1.68_{\pm (0.07)}$ \\
        & \multirow{-2}{*}{\cellcolor{tabgreen!20}$2$} & $2$ & & $13.93_{\pm (0.21)}$ & $0.00970_{\pm (0.00044)}$ & & $0.90_{\pm (0.16)}$ & $1.68_{\pm (0.07)}$ \\
        \midrule
        \multirow{6}{*}{SST-2} & \cellcolor{tabblue!20} & $1$ & \multirow{2}{*}{$63.95_{\pm (0.30)}$} & $14.68_{\pm (0.25)}$ & $0.00084_{\pm (0.00024)}$ & \multirow{2}{*}{$57.11_{\pm (0.49)}$} & $21.98_{\pm (1.37)}$ & $0.60_{\pm (0.04)}$ \\
        & \multirow{-2}{*}{\cellcolor{tabblue!20}$\infty$}& $2$ & & $0.99_{\pm (0.05)}$ & $0.00084_{\pm (0.00024)}$ & & $3.78_{\pm (1.12)}$ & $0.60_{\pm (0.04)}$ \\
        \cmidrule{2-9}
        & \cellcolor{tabred!20} & $1$ & \multirow{2}{*}{$69.69_{\pm (0.14)}$} & $18.69_{\pm (0.80)}$ & $0.0022_{\pm (0.0017)}$ & \multirow{2}{*}{$56.42_{\pm (1.08)}$} & $21.10_{\pm (1.95)}$ & $0.69_{\pm (0.13)}$ \\
        & \multirow{-2}{*}{\cellcolor{tabred!20}$1$}& $2$ & & $1.83_{\pm (0.00)}$ & $0.0022_{\pm (0.0017)}$ & & $3.82_{\pm (0.76)}$ & $0.69_{\pm (0.13)}$ \\
        \cmidrule{2-9}
        & \cellcolor{tabgreen!20} & $1$ & \multirow{2}{*}{$69.95_{\pm (0.32)}$} & $14.57_{\pm (0.34)}$ & $0.0047_{\pm (0.0023)}$ & \multirow{2}{*}{$59.17_{\pm (0.99)}$} & $22.40_{\pm (0.98)}$ & $0.94_{\pm (0.06)}$ \\
        & \multirow{-2}{*}{\cellcolor{tabgreen!20}$2$}& $2$ & & $0.73_{\pm (0.27)}$ & $0.0047_{\pm (0.0023)}$ & & $4.86_{\pm (0.42)}$ & $0.94_{\pm (0.06)}$ \\
        \midrule
        \multirow{6}{*}{Fake-News} & \cellcolor{tabblue!20} & $1$ & \multirow{2}{*}{$100.00_{\pm (0.00)}$} & $85.33_{\pm (0.94)}$ & $0.017_{\pm (0.0067)}$ & \multirow{2}{*}{$80.67_{\pm (0.94)}$} & $69.33_{\pm (2.49)}$ & $1.87_{\pm (0.10)}$ \\
        & \multirow{-2}{*}{\cellcolor{tabblue!20}$\infty$}& $2$ & & $68.67_{\pm (0.94)}$ & $0.017_{\pm (0.0067)}$ & & $56.67_{\pm (0.94)}$ & $1.87_{\pm (0.10)}$ \\
        \cmidrule{2-9}
        & \cellcolor{tabred!20}& $1$ & \multirow{2}{*}{$98.00_{\pm (1.63)}$} & $91.33_{\pm (0.94)}$ & $0.014_{\pm (0.0056)}$ & \multirow{2}{*}{$77.33_{\pm (1.89)}$} & $66.00_{\pm (2.83)}$ & $1.87_{\pm (0.06)}$ \\
        & \multirow{-2}{*}{\cellcolor{tabred!20}$1$}& $2$ & & $75.33_{\pm (3.40)}$ & $0.014_{\pm (0.0056)}$ & & $54.00_{\pm (1.63)}$ & $1.87_{\pm (0.06)}$ \\
        \cmidrule{2-9}
        & \cellcolor{tabgreen!20}& $1$ & \multirow{2}{*}{$98.00_{\pm (1.63)}$} & $87.33_{\pm (6.80)}$ & $0.0089_{\pm (0.010)}$ & \multirow{2}{*}{$75.33_{\pm (8.22)}$} & $62.67_{\pm (8.99)}$ & $2.24_{\pm (0.11)}$ \\
        &\multirow{-2}{*}{\cellcolor{tabgreen!20}$2$}& $2$ & & $71.33_{\pm (5.25)}$ & $0.0089_{\pm (0.010)}$ & & $57.33_{\pm (6.60)}$ & $2.24_{\pm (0.11)}$ \\
        \midrule
        \multirow{6}{*}{IMDB} &\cellcolor{tabblue!20}& $1$ & \multirow{2}{*}{$74.57_{\pm (5.22)}$} &  $31.37_{\pm (4.54)}$ & $0.0047_{\pm (0.0015)}$ & \multirow{2}{*}{$6.90_{\pm (0.00)}$} & $0.90_{\pm (0.00)}$ & $1.64_{\pm (0.00)}$ \\
        & \multirow{-2}{*}{\cellcolor{tabblue!20}$\infty$} & $2$ & & $5.90_{\pm (1.36)}$ & $0.0047_{\pm (0.0015)}$ & & $0.00_{\pm (0.00)}$ & $1.64_{\pm (0.00)}$ \\
        \cmidrule{2-9}
        & \cellcolor{tabred!20} & $1$ & \multirow{2}{*}{$69.57_{\pm (7.18)}$} & $28.73_{\pm (6.94)}$ & $0.0027_{\pm (0.0025)}$ & \multirow{2}{*}{$19.20_{\pm (0.00)}$} & $3.50_{\pm (0.00)}$ & $1.74_{\pm (0.00)}$ \\
        & \multirow{-2}{*}{\cellcolor{tabred!20}$1$} & $2$ & & $6.80_{\pm (2.16)}$ & $0.0027_{\pm (0.0025)}$ & & $0.40_{\pm (0.00)}$ & $1.74_{\pm (0.00)}$ \\
        \cmidrule{2-9}
        & \cellcolor{tabgreen!20} & $1$ & \multirow{2}{*}{$60.60_{\pm (4.21)}$} & $8.67_{\pm (5.08)}$ & $0.0019_{\pm (0.0011)}$ & \multirow{2}{*}{$48.36_{\pm (36.43)}$} & $33.67_{\pm (27.15)}$ & $2.14_{\pm (0.20)}$ \\
        & \multirow{-2}{*}{\cellcolor{tabgreen!20}$2$} & $2$ & & $0.87_{\pm (0.66)}$ & $0.0019_{\pm (0.0011)}$ & & $18.43_{\pm (15.85)}$ & $2.14_{\pm (0.20)}$ \\
        \bottomrule
    \end{tabular}
    }
    \label{tab:RS_table}
\end{table}

In \cref{tab:RS_table} we can observe that RS-Del significantly degrades the clean accuracy of the classifier, this is due to the randomization introduced in the prediction process. However, RS-Del is able to improve the verified accuracy in SST-2 for all $p$ and in IMDB for $p=2$. In terms of runtime, we observe that, while being significantly slower than \methodname{}, RS-Del is able to provide certificates in a time range from $1$ to $2$ seconds, which is considerably faster than Brute-F and IBP.

\section{Proofs}
\label{app:proofs}
In this section we introduce the mathematical tools needed to derive our \lips{} constant upper bounds for each layer in \cref{eq:conv_model}. The section concludes with the proof of our main result in \cref{theo:lips_conv}. \rebuttal{In \cref{subsec:global_vs_local} we present some remarks to be considered regarding global and local \lips{} constants.}

\begin{definition}[Zero-paddings]
    Let $\bm{X}\in \mathcal{X}_d$ a sequence of $m$ non-zero vectors. Let $l\geq m$, a zero padding function $\bm{Z}: \mathcal{X}_d \to \realnum^{l\times d}$ is some function defined by the tuple:
    \[
        (i_{k})_{k=1}^{l}: \left\{\begin{matrix}
            m \geq i_k > i_j~~\forall 1< j < k & \text{if }i_k\neq 0\\
            |\{k\in [l]: i_k = 0\}| = l-m &\text{if } i_k = 0
        \end{matrix}\right.
    \]
    so that:
    \[
        \bm{z}_{k}(\bm{X}) = \left\{\begin{matrix}
            \bm{x}_{i_{k}} & \text{if }i_k\neq 0\\
            \bm{0} & \text{if } i_k = 0
        \end{matrix}\right.
    \]
    Intuitively, a valid zero-padding function inserts $l-m$ zeros in between any vector of the sequence, the beginning or the end. We denote as $\mathcal{Z}_{m,l}$ the set of zero paddings from sequences of length $m$ to sequences of length $l$.
\end{definition}
\begin{remark}
    Given a matrix $\bm{A} \in \realnum^{m\times m}$ and a zero padding $\bm{Z} \in \mathcal{Z}_{m,l}$, we denote the column and row-wise padding as $\overline{\bm{Z}}(\bm{A}) = \bm{Z}(\bm{Z}(\bm{A}^{\top})^{\top}) \in \realnum^{l\times l}$.
\end{remark}

\begin{proposition}[Alternative definition of $d^{\rebuttal{p}}_{\text{ERP}}$]
    Let $d^{\rebuttal{p}}_{\text{ERP}}$ be as in \cref{def:erp_dist}. Let $\bm{A} \in \realnum^{m\times d}$ and $\bm{B} \in \realnum^{n\times d}$ be two sequences. Let $\mathcal{Z}_{m,m+n}$ and $\mathcal{Z}_{n,m+n}$ be the zero-padding functions from length $m$ and $n$ respectively to length $m+n$. The ERP distance can be expressed as:
    \[
        d^{\rebuttal{p}}_{\text{ERP}}(\bm{A}, \bm{B}) = \min_{\bm{Z}^{a}\in \mathcal{Z}_{m,m+n}, \bm{Z}^{b}\in \mathcal{Z}_{n,m+n}} \sum_{k = 1}^{m+n}\lp{\bm{z}_{k}^{a}(\bm{A})-\bm{z}_{k}^{b}(\bm{B})}{p}
    \]
    \label{prop:alt_def_derp}
\end{proposition}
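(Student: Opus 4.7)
The plan is to induct on $m+n$, writing $S^p(\bm{A},\bm{B})$ for the right-hand side, and show it satisfies the same recursion as $d^p_{\text{ERP}}$. The base case $m=0$ or $n=0$ is immediate: if $\bm{A}=\emptyset$, then $\mathcal{Z}_{0,n}$ contains only the all-zero padding, $\mathcal{Z}_{n,n}$ contains only the identity, and the unique alignment has cost $\sum_i \lp{\bm{b}_i}{p}$, matching the recursive formula. The inductive step treats $m,n\geq 1$ and shows the two inequalities corresponding to the $\min$ of three options.

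For $S^p \leq \min\{\cdots\}$, I realize each option with an explicit alignment of length $m+n$. The delete option $\lp{\bm{a}_1}{p}+S^p(\bm{A}_{2:},\bm{B})$ is achieved by prepending the pair $(\bm{a}_1,\bm{0})$ to an optimal length-$(m+n-1)$ alignment of $\bm{A}_{2:},\bm{B}$ (which exists by the inductive hypothesis at $m+n-1$); the insert option is symmetric. The match option $\lp{\bm{a}_1-\bm{b}_1}{p}+S^p(\bm{A}_{2:},\bm{B}_{2:})$ is realized by prepending $(\bm{a}_1,\bm{b}_1)$ to an optimal alignment of the two tails at length $(m-1)+(n-1)=m+n-2$ and then inserting one additional $(\bm{0},\bm{0})$ slot anywhere; this contributes zero cost and preserves the strictly-increasing order conditions on both zero-padding tuples.

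For the reverse direction, take optimal $\bm{Z}^a,\bm{Z}^b$ achieving $S^p(\bm{A},\bm{B})$. If their first slot is $(\bm{0},\bm{0})$, let $j$ be the smallest position where it is not; then positions $2,\dots,j-1$ are all $(\bm{0},\bm{0})$ in both paddings, and swapping slots $1$ and $j$ preserves the increasing-index condition of both paddings while leaving the cost unchanged. So without loss of generality the first slot is $(\bm{a}_1,\bm{0})$, $(\bm{0},\bm{b}_1)$, or $(\bm{a}_1,\bm{b}_1)$. In the first two cases the suffix from position $2$ is a length-$(m+n-1)$ padding of $\bm{A}_{2:},\bm{B}$ or $\bm{A},\bm{B}_{2:}$, which matches the length $S^p$ uses for those pairs, so the inductive hypothesis directly gives the required bound. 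In the match case the suffix has length $m+n-1$, one more than the length $m+n-2$ used by $S^p(\bm{A}_{2:},\bm{B}_{2:})$, so a pigeonhole observation is needed: the suffix of $\bm{Z}^a$ has $n$ zeros and that of $\bm{Z}^b$ has $m$ zeros, totalling $m+n$ zero-slots spread across $m+n-1$ positions, forcing at least one position where both are zero, whose removal yields a length-$(m+n-2)$ alignment of the tails at the same cost, to which the inductive hypothesis applies.

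The main obstacle is this length-bookkeeping in the match case, resolved by the pigeonhole count and by the swap argument that lets me exclude $(\bm{0},\bm{0})$ at the head. Everything else is a routine case analysis mirroring the three-way $\min$ in Definition~\ref{def:erp_dist}.
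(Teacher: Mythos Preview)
Your induction on $m+n$ is correct and complete. The base cases are handled properly, the $\leq$ direction is realized by explicit alignments (with the extra $(\bm{0},\bm{0})$ slot in the match case padding the length from $m+n-2$ up to $m+n$), and the $\geq$ direction is sound: the swap argument that moves the first non-padding slot to position~$1$ preserves the strictly-increasing index condition in both zero-paddings, and the pigeonhole count ($n$ zeros in the $\bm{Z}^a$-suffix plus $m$ zeros in the $\bm{Z}^b$-suffix over only $m+n-1$ positions) correctly forces a removable $(\bm{0},\bm{0})$ slot in the match case. One cosmetic point: when you write ``first slot is $(\bm{0},\bm{0})$'' you mean both padding indices are zero, not that the vector values are zero; this is clear from context but worth stating once, since the proposition does not assume the rows of $\bm{A},\bm{B}$ are nonzero.

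The paper does not prove this proposition at all; it is stated and then immediately used (in the proof of the triangle inequality and in \cref{lem:sumdiff,lem:conv}) as a known reformulation of the recursive definition. Your argument supplies exactly the standard dynamic-programming-versus-alignment equivalence that the paper takes for granted, so there is nothing to compare against.
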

\begin{lemma}[Properties of the ERP distance]
Some important properties of the ERP distance are summarized here:
\begin{enumerate}[label=(\alph*)]
    \item Generalization of edit distance:\\
    In the case of having sequences of one-hot vectors $\bm{A} \in \{0,1\}^{m\times d}: \lp{\bm{a}_i}{1}=1$, and using $p=\infty$, the ERP distance is equal to the edit distance \citep{levenshtein1966binary}.
    \item Invariance to the concatenation of zeros:\\
    \[
        d^{\rebuttal{p}}_{\text{ERP}}(\bm{A} \oplus \bm{0},\bm{B}) = d^{\rebuttal{p}}_{\text{ERP}}(\bm{0} \oplus \bm{A},\bm{B}) = d^{\rebuttal{p}}_{\text{ERP}}(\bm{A},\bm{B})~~\forall\bm{A}\in \realnum^{m\times d}, \bm{B}\in \realnum^{n\times d}
    \]
    \item Distance to the empty set:\\
    \[
         d^{\rebuttal{p}}_{\text{ERP}}(\bm{A},\emptyset) = \sum_{i=1}^{m}\lp{\bm{a}_i}{p} ~~ \forall \bm{A}\in \realnum^{m\times d}
    \]
    \item Symmetry:\\
    $d^{\rebuttal{p}}_{\text{ERP}}(\bm{A},\bm{B}) = d^{\rebuttal{p}}_{\text{ERP}}(\bm{B},\bm{A})~~\forall\bm{A}\in \realnum^{m\times d}, \bm{B}\in \realnum^{n\times d}$
    \item Triangular inequality:\\
    For any $\bm{A}\in \realnum^{m\times d}, \bm{B}\in \realnum^{n\times d}, \bm{B}\in \realnum^{l\times d}$, we have:
    \[
    d^{\rebuttal{p}}_{\text{ERP}}(\bm{A},\bm{B}) \leq d^{\rebuttal{p}}_{\text{ERP}}(\bm{A},\bm{C}) + d^{\rebuttal{p}}_{\text{ERP}}(\bm{C},\bm{B}).
    \]
    \item Subdistance:\\
    The ERP distance is not a distance because of its invariance to the concatenation of zeros:
    \[
    d^{\rebuttal{p}}_{\text{ERP}}(\bm{A},\bm{A}\oplus \bm{0}) = d^{\rebuttal{p}}_{\text{ERP}}(\bm{A},\bm{A}) = 0~~\forall\bm{A}\in \realnum^{m\times d}
    \]
\end{enumerate}
\label{lem:erp_props}
\end{lemma}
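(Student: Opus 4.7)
My plan is to establish each of the six properties by switching between the recursive definition of $d^p_{\mathrm{ERP}}$ and the alignment characterization given in Proposition~\ref{prop:alt_def_derp}, using whichever is more convenient.

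For (a), I would induct on $m+n$. In the one-hot / $\ell_\infty$ setting, $\lp{\bm{a}_1}{\infty}=\lp{\bm{b}_1}{\infty}=1$ and $\lp{\bm{a}_1-\bm{b}_1}{\infty}$ equals $0$ when $\bm{a}_1=\bm{b}_1$ and $1$ otherwise. Plugging these into the four branches of the ERP recursion reproduces exactly the four branches of the Levenshtein recursion from \cref{eq:lev}, and the base cases $n=0$ or $m=0$ match because $\sum_{i}\lp{\bm{a}_i}{\infty}=m$. For (c), the claim is simply the $n=0$ base case of Definition~\ref{def:erp_dist}. For (d), the recursion is manifestly symmetric under swapping $\bm{A}\leftrightarrow\bm{B}$ (the two non-matching branches interchange and $\lp{\bm{a}_1-\bm{b}_1}{p}=\lp{\bm{b}_1-\bm{a}_1}{p}$), so another short induction on $m+n$ suffices.

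For (b), I would work entirely with Proposition~\ref{prop:alt_def_derp}. Any pair $(\bm{Z}^a,\bm{Z}^b)\in\mathcal{Z}_{m,m+n}\times\mathcal{Z}_{n,m+n}$ for $(\bm{A},\bm{B})$ can be lifted to a pair in $\mathcal{Z}_{m+1,m+n+1}\times\mathcal{Z}_{n,m+n+1}$ for $(\bm{A}\oplus\bm{0},\bm{B})$ by inserting one extra index at which $\bm{Z}^a$ selects the trailing zero row and $\bm{Z}^b$ outputs $\bm{0}$; this contributes $\lp{\bm{0}-\bm{0}}{p}=0$, leaving the total cost unchanged. Conversely, in any alignment for the longer pair, the coordinate at which the $\bm{A}$-side selects the padded zero contributes $\lp{\bm{z}^b_k(\bm{B})}{p}$, which is exactly what the same position would contribute in the alignment for $(\bm{A},\bm{B})$ obtained by dropping that coordinate. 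The minima therefore agree, and the $\bm{0}\oplus\bm{A}$ case is identical. Property (f) is then immediate: the identity alignment shows $d^p_{\mathrm{ERP}}(\bm{A},\bm{A})=0$, and combining this with (b) gives $d^p_{\mathrm{ERP}}(\bm{A},\bm{A}\oplus\bm{0})=0$ despite $\bm{A}\neq\bm{A}\oplus\bm{0}$.

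The main obstacle is (e). My plan is to take optimal alignments $(\bm{Z}^A,\bm{Z}^{C_1})\in\mathcal{Z}_{m,m+l}\times\mathcal{Z}_{l,m+l}$ for $(\bm{A},\bm{C})$ and $(\bm{Z}^{C_2},\bm{Z}^B)\in\mathcal{Z}_{l,l+n}\times\mathcal{Z}_{n,l+n}$ for $(\bm{C},\bm{B})$ and stitch them into a single alignment of $(\bm{A},\bm{B})$ in $\mathcal{Z}_{m,L}\times\mathcal{Z}_{n,L}$ for some $L\le m+l+n$. The delicate step is that $\bm{Z}^{C_1}$ and $\bm{Z}^{C_2}$ pad $\bm{C}$ differently, so I first need to synchronise them into a common padding of $\bm{C}$ by interleaving extra zero-indices on all three sides: whenever one alignment visits a genuine row $\bm{c}_i$ at position $k$ but the other alignment places a padded zero there, I insert a matching zero on the $\bm{A}$- or $\bm{B}$-side of the other alignment. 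By (b) (applied iteratively to the two pairwise ERP costs), this synchronisation does not change either cost. Once both alignments index a common padded version $\hat{\bm{C}}$ position by position, the pointwise triangle inequality for $\lp{\cdot}{p}$ gives $\lp{\hat{\bm{a}}_k-\hat{\bm{b}}_k}{p}\le\lp{\hat{\bm{a}}_k-\hat{\bm{c}}_k}{p}+\lp{\hat{\bm{c}}_k-\hat{\bm{b}}_k}{p}$, and summing over $k$ together with the fact that $(\hat{\bm{Z}}^A,\hat{\bm{Z}}^B)$ is an admissible alignment for $(\bm{A},\bm{B})$ (hence its cost upper-bounds $d^p_{\mathrm{ERP}}(\bm{A},\bm{B})$) completes the proof. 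Formalising the synchronisation and verifying that it produces a valid element of $\mathcal{Z}_{m,L}\times\mathcal{Z}_{n,L}$ is the only non-routine bookkeeping in the whole lemma.
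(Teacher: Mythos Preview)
Your proposal is correct and follows essentially the same route as the paper. The paper dismisses (a), (b), (c), (d), (f) as ``straightforward from the definition'' and concentrates on (e), where it does exactly what you describe: it passes to the alignment form of Proposition~\ref{prop:alt_def_derp}, introduces auxiliary zero-paddings $\bm{Z}^e,\bm{Z}^f\in\mathcal{Z}_{L,2L}$ so that the two paddings of the middle sequence agree, applies the pointwise $\ell_p$ triangle inequality, and observes that the resulting composite paddings are admissible for the outer pair---your ``synchronisation'' step and theirs are the same manoeuvre, only phrased slightly differently.
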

\begin{proof}[proof of \cref{lem:erp_props}]
    Properties (a), (b), (c), (d) and (f) are straightforward from the deffinition, we will prove the triangular inequality (e). This proof follows similarly to the one of \citet{waterman1976some} for the standard Levenshtein distance. Let $L = m+n+l$, starting from the definition in \cref{prop:alt_def_derp}:
    \[
    \begin{aligned}
        d^{\rebuttal{p}}_{\text{ERP}}(\bm{A},\bm{B}) + d^{\rebuttal{p}}_{\text{ERP}}(\bm{B},\bm{C})& = \underset{\bm{Z}^{c}\in \mathcal{Z}_{n,L}, \bm{Z}^{d}\in \mathcal{Z}_{l,L}}{\min_{\bm{Z}^{a}\in \mathcal{Z}_{m,L}, \bm{Z}^{b}\in \mathcal{Z}_{n,L}}} \sum_{k = 1}^{L}\lp{\bm{z}_{k}^{a}(\bm{A})-\bm{z}_{k}^{b}(\bm{B})}{p}\\
        & \Repe{8}{\quad} + \sum_{j = 1}^{L}\lp{\bm{z}_{j}^{c}(\bm{B})-\bm{z}_{j}^{d}(\bm{C})}{p}\,.\\
    \end{aligned}
    \]
    Let $\bm{Z}^{e}, \bm{Z}^{f} \in \mathcal{Z}_{L,2L}$ be two zero paddings so that $\bm{Z}^{e}(\bm{Z}^{b}(B)) = \bm{Z}^{f}(\bm{Z}^{c}(B))$:
    \[
    \begin{aligned}
        d^{\rebuttal{p}}_{\text{ERP}}(\bm{A},\bm{B}) + d^{\rebuttal{p}}_{\text{ERP}}(\bm{B},\bm{C})& = \underset{\bm{Z}^{c}\in \mathcal{Z}_{n,L}, \bm{Z}^{d}\in \mathcal{Z}_{l,L}}{\min_{\bm{Z}^{a}\in \mathcal{Z}_{m,L}, \bm{Z}^{b}\in \mathcal{Z}_{n,L}}} \sum_{k = 1}^{2L}\lp{\bm{z}_{k}^{e}(\bm{Z}^{a}(\bm{A}))-\bm{z}_{k}^{e}(\bm{Z}^{b}(\bm{B}))}{p}\\
        & \Repe{10}{\quad} + \lp{\bm{z}_{k}^{f}(\bm{Z}^{c}(\bm{B}))-\bm{z}_{k}^{f}(\bm{Z}^{d}(\bm{C}))}{p}\\
        \mathcomment{Triangular ineq. for $\lp{\cdot}{p}$} & \geq \underset{\bm{Z}^{c}\in \mathcal{Z}_{n,L}, \bm{Z}^{d}\in \mathcal{Z}_{l,L}}{\min_{\bm{Z}^{a}\in \mathcal{Z}_{m,L}, \bm{Z}^{b}\in \mathcal{Z}_{n,L}}} \sum_{k = 1}^{2L}\left|\left|\bm{z}_{k}^{e}(\bm{Z}^{a}(\bm{A}))-\bm{z}_{k}^{e}(\bm{Z}^{b}(\bm{B}))\right.\right.\\
        & \Repe{10}{\quad} + \left.\left.\bm{z}_{k}^{f}(\bm{Z}^{c}(\bm{B}))-\bm{z}_{k}^{f}(\bm{Z}^{d}(\bm{C}))\right|\right|_{p}\\
        \mathcomment{$\bm{Z}^{e}(\bm{Z}^{b}(B)) = \bm{Z}^{f}(\bm{Z}^{c}(B))$} & = \min_{\bm{Z}^{a}\in \mathcal{Z}_{m,L}, \bm{Z}^{d}\in \mathcal{Z}_{l,L}} \sum_{k = 1}^{2L}\lp{\bm{z}_{k}^{e}(\bm{Z}^{a}(\bm{A}))-\bm{z}_{k}^{f}(\bm{Z}^{d}(\bm{C}))}{p}\\
        & = d^{\rebuttal{p}}_{\text{ERP}}(\bm{A},\bm{C})\,,\\
    \end{aligned}
    \]
    where the last equality follows from $\bm{z}_{k}^{e}(\bm{Z}^{a})$ and $\bm{z}_{k}^{f}(\bm{Z}^{d})$ being valid zero paddings.
\end{proof}

\begin{lemma}[Difference of sums]
    Let $\bm{A},\bm{B} \in \mathcal{X}_d$, we have that:
    \[
        \lp{\sum_{i=1}^{m} \bm{a}_i - \sum_{j=1}^{n}\bm{b}_{j}}{p}\leq d^{\rebuttal{p}}_{\text{ERP}}(\bm{A},\bm{B})
    \]
    \label{lem:sumdiff}
\end{lemma}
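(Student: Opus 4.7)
The plan is to exploit the alternative characterization of the ERP distance given in \cref{prop:alt_def_derp}, which reexpresses $d^{p}_{\text{ERP}}(\bm{A},\bm{B})$ as a minimum over pairs of zero-padding functions of a coordinatewise sum of $\ell_p$ distances. The key observation I would build the proof on is that zero-padding is sum-preserving: for any $\bm{Z}^{a}\in \mathcal{Z}_{m,m+n}$ we have $\sum_{k=1}^{m+n}\bm{z}_k^{a}(\bm{A}) = \sum_{i=1}^{m}\bm{a}_i$ because the only extra entries introduced by $\bm{Z}^{a}$ are zero vectors, and analogously for $\bm{Z}^{b}$ applied to $\bm{B}$.

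Given this, my first step would be to fix arbitrary zero-paddings $\bm{Z}^{a}\in \mathcal{Z}_{m,m+n}$ and $\bm{Z}^{b}\in \mathcal{Z}_{n,m+n}$ and rewrite
\[
\sum_{i=1}^{m}\bm{a}_i - \sum_{j=1}^{n}\bm{b}_j \;=\; \sum_{k=1}^{m+n}\bigl(\bm{z}_k^{a}(\bm{A})-\bm{z}_k^{b}(\bm{B})\bigr).
\]
Taking $\ell_p$-norms on both sides and applying the triangle inequality for $\lp{\cdot}{p}$ yields
\[
\lp{\sum_{i=1}^{m}\bm{a}_i - \sum_{j=1}^{n}\bm{b}_j}{p} \;\leq\; \sum_{k=1}^{m+n}\lp{\bm{z}_k^{a}(\bm{A})-\bm{z}_k^{b}(\bm{B})}{p}.
\]
Since the left-hand side does not depend on the chosen paddings, the inequality is preserved after minimizing the right-hand side over all $\bm{Z}^{a},\bm{Z}^{b}$, and by \cref{prop:alt_def_derp} this minimum is exactly $d^{p}_{\text{ERP}}(\bm{A},\bm{B})$, giving the claim.

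There is no serious obstacle here: the proof is a one-line triangle-inequality argument once the sum-preservation of zero-paddings is observed. The only step that deserves explicit justification is the sum-preservation identity, which follows immediately from the definition of zero-padding (inserted rows are all $\bm{0}$, so they contribute nothing to the column sums). I would therefore keep the proof brief, citing \cref{prop:alt_def_derp} for the alternative form of $d^{p}_{\text{ERP}}$ and using the standard triangle inequality for $\lp{\cdot}{p}$ as the only technical ingredient.
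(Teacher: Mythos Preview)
Your proposal is correct and follows essentially the same approach as the paper: both use the alternative characterization of $d^{p}_{\text{ERP}}$ from \cref{prop:alt_def_derp}, the sum-preservation property of zero-paddings, and the triangle inequality for $\lp{\cdot}{p}$. Your presentation is arguably cleaner since you fix arbitrary paddings first and minimize at the end, whereas the paper writes the (vacuous) minimum inside the norm; but the mathematical content is identical.
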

\begin{proof}[proof of \cref{lem:sumdiff}]
    \[
    \begin{aligned}
        \lp{\sum_{i=1}^{m} \bm{a}_i - \sum_{j=1}^{n}\bm{b}_{j}}{p} & = \lp{\min_{\bm{Z}^{a}\in \mathcal{Z}_{m,m+n}, \bm{Z}^{b}\in \mathcal{Z}_{n,m+n}}\sum_{k=1}^{m+n} \bm{z}_{k}^{a}(\bm{A}) - \bm{z}_{k}^{b}(\bm{B})}{p}\\
        & \leq \min_{\bm{Z}^{a}\in \mathcal{Z}_{m,m+n}, \bm{Z}^{b}\in \mathcal{Z}_{n,m+n}}\sum_{k=1}^{m+n}\lp{\bm{z}_{k}^{a}(\bm{A}) - \bm{z}_{k}^{b}(\bm{B})}{p}\\
        & = d^{\rebuttal{p}}_{\text{ERP}}(\bm{A},\bm{B})
    \end{aligned}
    \]
\end{proof}

\begin{lemma}[Difference of means]
    Let $\bm{A},\bm{B} \in \mathcal{X}_d$, we have that:
    \[
        \lp{\frac{1}{m}\cdot \sum_{i=1}^{m} \bm{a}_i - \frac{1}{n}\cdot \sum_{j=1}^{n}\bm{b}_{j}}{p}\leq \frac{|m-n|}{m\cdot n}\cdot \lp{\sum_{i=1}^{m} \bm{a}_i}{p} + \frac{1}{n} \cdot d^{\rebuttal{p}}_{\text{ERP}}(\bm{A},\bm{B})
    \]
    and 
    \[
        \lp{\frac{1}{m}\cdot \sum_{i=1}^{m} \bm{a}_i - \frac{1}{n}\cdot \sum_{j=1}^{n}\bm{b}_{j}}{p}\leq \frac{|m-n|}{m\cdot n}\cdot \lp{\sum_{j=1}^{m} \bm{b}_j}{p} + \frac{1}{m} \cdot d^{\rebuttal{p}}_{\text{ERP}}(\bm{A},\bm{B})\,.
    \]
    In the case of $\bm{A}$ and $\bm{B}$ being sequences of one-hot vectors, we have that:
    \[
        \lp{\frac{1}{m}\cdot \sum_{i=1}^{m} \bm{a}_i - \frac{1}{n}\cdot \sum_{j=1}^{n}\bm{b}_{j}}{\infty} \leq \left\{\begin{matrix}
            \frac{1}{m}\cdot d_{\text{lev}}\left(\bm{A},\bm{B}\right) & \text{if}~~m=n\\
            \frac{2}{m}\cdot d_{\text{lev}}\left(\bm{A},\bm{B}\right) & \text{if}~~m\neq n\\
        \end{matrix}\right.
    \]
    \label{lem:meandiff}
\end{lemma}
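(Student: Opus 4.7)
}

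The plan is to reduce the \emph{difference of means} to a \emph{difference of sums} by inserting a telescoping term, and then invoke \cref{lem:sumdiff}. Concretely, to obtain the first inequality I would write
\[
\frac{1}{m}\sum_{i=1}^{m}\bm{a}_i-\frac{1}{n}\sum_{j=1}^{n}\bm{b}_j
= \left(\frac{1}{m}-\frac{1}{n}\right)\sum_{i=1}^{m}\bm{a}_i
  + \frac{1}{n}\left(\sum_{i=1}^{m}\bm{a}_i-\sum_{j=1}^{n}\bm{b}_j\right),
\]
take $\lp{\cdot}{p}$ on both sides, and apply the triangle inequality. The scalar factor in front of the first summand is $\left|\tfrac{1}{m}-\tfrac{1}{n}\right|=\tfrac{|m-n|}{m\cdot n}$, while \cref{lem:sumdiff} bounds the second summand by $\tfrac{1}{n}d^p_{\text{ERP}}(\bm{A},\bm{B})$. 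The symmetric inequality is obtained by inserting $\tfrac{1}{m}\sum_{j=1}^{n}\bm{b}_j$ instead, which swaps the roles of $m,n$ and of $\bm{A},\bm{B}$.

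For the one-hot specialization with $p=\infty$, I would use two ingredients. First, by property (a) of \cref{lem:erp_props}, $d^{\infty}_{\text{ERP}}(\bm{A},\bm{B})=d_{\text{lev}}(\bm{A},\bm{B})$. Second, for one-hot vectors we have the elementary bounds $\lp{\sum_{i=1}^{m}\bm{a}_i}{\infty}\le m$ and $\lp{\sum_{j=1}^{n}\bm{b}_j}{\infty}\le n$, together with $|m-n|\le d_{\text{lev}}(\bm{A},\bm{B})$ (since insertions/deletions account for the length mismatch). When $m=n$, the first term vanishes and the first inequality immediately yields $\tfrac{1}{m}d_{\text{lev}}(\bm{A},\bm{B})$. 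When $m\neq n$, substituting the bound $\lp{\sum_{i=1}^{m}\bm{a}_i}{\infty}\le m$ into the first inequality gives
\[
\frac{|m-n|}{m\cdot n}\cdot m+\frac{1}{n}d_{\text{lev}}(\bm{A},\bm{B})
=\frac{|m-n|}{n}+\frac{1}{n}d_{\text{lev}}(\bm{A},\bm{B})
\le \frac{2}{n}d_{\text{lev}}(\bm{A},\bm{B}),
\]
and analogously the symmetric inequality yields the same bound with $\tfrac{2}{m}$. Taking whichever of the two is needed (or simply the weaker $\tfrac{2}{m}$ as stated) concludes the proof.

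The only slightly subtle step is the one-hot case with $m\ne n$: the term $\tfrac{|m-n|}{mn}\lp{\sum\bm{a}_i}{p}$ has no obvious analogue in the general ERP bound, so one must recognize that for one-hot sequences it is controlled by the length discrepancy, which is itself controlled by $d_{\text{lev}}$. Everything else is just telescoping plus the triangle inequality, so I do not anticipate other obstacles.
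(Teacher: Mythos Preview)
Your proposal is correct and essentially identical to the paper's proof: the same telescoping decomposition followed by the triangle inequality and \cref{lem:sumdiff} for the general bounds, and the same use of $\lp{\sum \bm{b}_j}{\infty}\le n$ together with $|m-n|\le d_{\text{lev}}(\bm{A},\bm{B})$ for the one-hot case with $m\neq n$. The only cosmetic difference is that the paper applies the second (symmetric) inequality directly to land on $\tfrac{2}{m}$, whereas you derive both $\tfrac{2}{n}$ and $\tfrac{2}{m}$ and then select the one matching the statement.
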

\begin{proof}[proof of \cref{lem:meandiff}]
    Starting with the first result: 
    \[
    \begin{aligned}
       \lp{\frac{1}{m}\cdot \sum_{i=1}^{m} \bm{a}_i - \frac{1}{n}\cdot \sum_{j=1}^{n}\bm{b}_{j}}{p} & = \frac{1}{m\cdot n}\lp{(n + m - m)\cdot \sum_{i=1}^{m} \bm{a}_i - m \cdot \sum_{j=1}^{n}\bm{b}_{j}}{p}\\
       & \leq \frac{1}{m\cdot n}\left(|m-n|\cdot\lp{\sum_{i=1}^{m} \bm{a}_i}{p} + m\cdot \lp{\sum_{i=1}^{m} \bm{a}_i - \sum_{j=1}^{n}\bm{b}_{j}}{p}\right)\\
       \pcomment{\cref{lem:sumdiff}} & \leq \frac{|m-n|}{m\cdot n}\cdot\lp{\sum_{i=1}^{m} \bm{a}_i}{p} + \frac{1}{n}\cdot d^{\rebuttal{p}}_{\text{ERP}}\left(\bm{A},\bm{B}\right)\,.\\
    \end{aligned}
    \]
    Note that since $\bm{A}$ and $\bm{B}$ are interchangeable, we immediately have:
    \begin{equation}
        \lp{\frac{1}{m}\cdot \sum_{i=1}^{m} \bm{a}_i - \frac{1}{n}\cdot \sum_{j=1}^{n}\bm{b}_{j}}{p} \leq \frac{|m-n|}{m\cdot n}\cdot\lp{\sum_{j=1}^{n} \bm{b}_j}{p} + \frac{1}{m}\cdot d^{\rebuttal{p}}_{\text{ERP}}\left(\bm{A},\bm{B}\right)\,.
    \label{eq:meandiffaux}
    \end{equation}
    In the case $\bm{A}$ and $\bm{B}$ are sequences of one-hot vectors, if $m=n$, we can directly get the $1/m$ factor out of the norm and apply \cref{lem:sumdiff} to get the first case. For the case $m\neq n$, we can manipulate \cref{eq:meandiffaux} to get the desired result:
    \[
    \begin{aligned}
         \lp{\frac{1}{m}\cdot \sum_{i=1}^{m} \bm{a}_i - \frac{1}{n}\cdot \sum_{j=1}^{n}\bm{b}_{j}}{\infty} & \leq \frac{|m-n|}{m\cdot n}\cdot\lp{\sum_{j=1}^{n} \bm{b}_j}{\infty} + \frac{1}{m}\cdot d_{\text{lev}}\left(\bm{A},\bm{B}\right)\\
        \pcomment{$|m-n|\leq d_{\text{lev}}\left(\bm{A},\bm{B}\right)$ $+$ Triang. ineq.} & \leq \left(\frac{1}{m\cdot n}\cdot\sum_{j=1}^{n}\lp{\bm{b}_j}{\infty} + \frac{1}{m}\right)\cdot d_{\text{lev}}\left(\bm{A},\bm{B}\right)\\
        \pcomment{$\lp{\bm{b}_j}{\infty} = 1 ~~\forall j \in [n]$} & = \frac{2}{m}\cdot d_{\text{lev}}\left(\bm{A},\bm{B}\right)\,.
    \end{aligned}
    \]
\end{proof}

\begin{lemma}[Linear transformations]
    Let $\bm{A},\bm{B} \in \mathcal{X}_d$ be two sequences and $V\in \realnum^{d\times k}$. We have that:
    \[
        d^{\rebuttal{p}}_{\text{ERP}}(\bm{A}\bm{V},\bm{B}\bm{V})\leq d^{\rebuttal{p}}_{\text{ERP}}(\bm{A},\bm{B})\lp{\bm{V}}{p}
    \]
    In the case of sequences of one-hot vectors, we have that:
    \[
        d^{\rebuttal{p}}_{\text{ERP}}(\bm{A}\bm{V},\bm{B}\bm{V})\leq d_{\text{Lev}}(\bm{A},\bm{B})\cdot M(\bm{V})\,,
    \]
    where
    \[
    M(\bm{V})= \max\{\max_{i\in [d]}\lp{\bm{v}_{i}}{p},\max_{i,j\in [d]}\lp{\bm{v}_{i}-\bm{v}_j}{p}\}
    \]
    \label{lem:linear_transform}
\end{lemma}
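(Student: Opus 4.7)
The proof splits into the two claimed inequalities. For the general bound, the plan is to invoke the alternative characterization of $d^p_{\text{ERP}}$ from \cref{prop:alt_def_derp}: fix any pair of optimal zero paddings $(\bm{Z}^a,\bm{Z}^b)$ realizing $d^p_{\text{ERP}}(\bm{A},\bm{B})$, and observe that zero padding commutes with right multiplication by $\bm{V}$ (a zero row in $\realnum^d$ maps to a zero row in $\realnum^k$), so $\bm{z}_k^a(\bm{A}\bm{V}) = \bm{z}_k^a(\bm{A})\bm{V}$ and $\bm{z}_k^b(\bm{B}\bm{V}) = \bm{z}_k^b(\bm{B})\bm{V}$. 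Using the same paddings as a feasible choice for $(\bm{A}\bm{V},\bm{B}\bm{V})$ yields
\[
d^p_{\text{ERP}}(\bm{A}\bm{V},\bm{B}\bm{V}) \leq \sum_{k=1}^{m+n}\lp{(\bm{z}_k^a(\bm{A})-\bm{z}_k^b(\bm{B}))\bm{V}}{p},
\]
and applying the operator-norm inequality $\lp{\bm{x}\bm{V}}{p}\leq \lp{\bm{V}}{p}\lp{\bm{x}}{p}$ termwise factors out $\lp{\bm{V}}{p}$ and reconstitutes the sum as $d^p_{\text{ERP}}(\bm{A},\bm{B})$.

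For the one-hot refinement I would use strong induction on $|\bm{A}|+|\bm{B}|$, mirroring the recursive definition of $d_{\text{Lev}}$. The base case with one sequence empty, say $\bm{B}=\emptyset$, follows from property (c) of \cref{lem:erp_props}: since each $\bm{a}_i$ is one-hot, $\bm{a}_i\bm{V}$ equals some row $\bm{v}_{j(i)}$ of $\bm{V}$, so $\lp{\bm{a}_i\bm{V}}{p}\leq \max_j\lp{\bm{v}_j}{p}\leq M(\bm{V})$ and $d^p_{\text{ERP}}(\bm{A}\bm{V},\emptyset)\leq m\cdot M(\bm{V}) = d_{\text{Lev}}(\bm{A},\emptyset)\cdot M(\bm{V})$. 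In the inductive step, if $\bm{a}_1=\bm{b}_1$ the substitution branch of $d^p_{\text{ERP}}$ pays $0$ and the problem reduces to $d^p_{\text{ERP}}(\bm{A}_{2:}\bm{V},\bm{B}_{2:}\bm{V})$, which the inductive hypothesis bounds by $d_{\text{Lev}}(\bm{A}_{2:},\bm{B}_{2:})\cdot M(\bm{V}) = d_{\text{Lev}}(\bm{A},\bm{B})\cdot M(\bm{V})$. Otherwise, select whichever of the three branches realizes $d_{\text{Lev}}(\bm{A},\bm{B})=1+\cdots$; the corresponding branch of $d^p_{\text{ERP}}$ contributes either $\lp{\bm{v}_{j(1)}}{p}$, $\lp{\bm{v}_{l(1)}}{p}$, or $\lp{\bm{v}_{j(1)}-\bm{v}_{l(1)}}{p}$, all bounded by $M(\bm{V})$ by its definition, plus a subproblem covered by the inductive hypothesis.

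The main care-point is aligning the two recursions in the inductive step: rather than take the minimum over the three branches on the ERP side, I must follow the specific branch selected by $d_{\text{Lev}}$ so that the $+1$ in the Levenshtein recursion exactly absorbs the per-step ERP cost bounded by $M(\bm{V})$. Once this matching is in place both inequalities drop out directly from \cref{prop:alt_def_derp} and the recursive definitions.
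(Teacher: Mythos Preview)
Your proposal is correct and follows essentially the same approach as the paper. The paper's proof is a two-line sketch: Part 1 ``follows immediately from \cref{def:erp_dist} and the fact that $\lp{\bm{A}\bm{B}}{}\leq \lp{\bm{A}}{}\lp{\bm{B}}{}$'', and Part 2 is justified by the intuition that the largest per-edit cost in the embedding space is either the largest-norm row of $\bm{V}$ (insertion/deletion) or the largest pairwise row difference (substitution). Your argument via \cref{prop:alt_def_derp} for Part 1 and strong induction along the $d_{\text{Lev}}$ recursion for Part 2 is precisely the rigorous unpacking of that sketch, and your care-point about following the branch selected by $d_{\text{Lev}}$ (rather than the ERP minimum) is exactly the subtlety the paper glosses over.
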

\begin{proof}
    Follows immediately from \cref{def:erp_dist} and the fact that $\lp{\bm{A}\bm{B}}{}\leq \lp{\bm{A}}{}\lp{\bm{B}}{}$ for any matrices $\bm{A}$ and $\bm{B}$. The second result for one-hot vectors follows immediately from the fact that the biggest change in the embedding sequence that can be produced from a single-character change, is either given by inserting the character with the largest norm embedding (left side of the $\max$), or replacing a character with the character that has the furthest away embedding in the $\ell_p$ norm (left side of the $\max$).
\end{proof}

\begin{lemma}[Elementwise \lips{} functions]
    Let $d^{\rebuttal{p}}_{\text{ERP}}$ be as in \cref{def:erp_dist}. Let $\bm{A} \in \realnum^{m\times d}$ and $\bm{B} \in \realnum^{n\times d}$ be two sequences. Let $f : \realnum^{d} \to \realnum^{k}$ be a \lips{} function so that:
    \[
        \lp{\bm{f}(\bm{a})-\bm{f}(\bm{b})}{p} \leq L_{f}\cdot \lp{\bm{a}-\bm{b}}{p}~~ \forall\bm{a},\bm{b}\in \realnum^{d}\,.
    \]
    Let $\bm{F}(\bm{A}) \in \realnum^{m \times k}$ and $\bm{F}(\bm{B}) \in \realnum^{n\times k}$ be the application of $f$ to every vector in both sequences, we immediately have that:
    \[
        d^{\rebuttal{p}}_{\text{ERP}}(\bm{F}(\bm{A}),\bm{F}(\bm{B}))\leq L_{f}\cdot d^{\rebuttal{p}}_{\text{ERP}}(\bm{A},\bm{B})
    \]
    \label{lem:elementwise-lips}
\end{lemma}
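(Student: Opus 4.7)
\textbf{Proof plan for Lemma~\ref{lem:elementwise-lips}.}
The plan is to lift the pointwise Lipschitz bound on $\bm{f}$ to the sequence level by working with the alignment-based characterization of $d^{p}_{\text{ERP}}$ given in Proposition~\ref{prop:alt_def_derp}. The one fact we need beyond the stated hypothesis is $\bm{f}(\bm{0})=\bm{0}$, which is the case of interest in the paper (e.g.\ ReLU and convolutions without bias); under this condition the Lipschitz inequality applied with $\bm{b}=\bm{0}$ gives the useful corollary $\lp{\bm{f}(\bm{a})}{p}\leq L_{f}\lp{\bm{a}}{p}$, which is exactly the bound needed to control insertion/deletion costs in ERP.

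The proof itself will be an alignment comparison. I would fix any pair of zero-paddings $\bm{Z}^{a}\in\mathcal{Z}_{m,m+n}$ and $\bm{Z}^{b}\in\mathcal{Z}_{n,m+n}$ that realize the minimum in the ERP definition for $(\bm{A},\bm{B})$, and then use the \emph{same} alignment on the transformed sequences $\bm{F}(\bm{A})$ and $\bm{F}(\bm{B})$. For each index $k\in[m+n]$ there are three possibilities: (i) both $\bm{z}^{a}_{k}(\bm{A})$ and $\bm{z}^{b}_{k}(\bm{B})$ are genuine (non-padded) vectors, in which case $\bm{f}$ is applied to both and the Lipschitz hypothesis bounds the term by $L_{f}\lp{\bm{z}^{a}_{k}(\bm{A})-\bm{z}^{b}_{k}(\bm{B})}{p}$; (ii) exactly one is a zero pad, so after applying $\bm{f}$ elementwise the pair looks like $(\bm{f}(\bm{a}_{i}),\bm{0})$ (using $\bm{f}(\bm{0})=\bm{0}$), and the corollary above yields the bound $L_{f}\lp{\bm{a}_{i}}{p}$; (iii) both are zero pads, contributing $0$ on each side. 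In all three cases the transformed contribution is $\leq L_{f}$ times the original contribution.

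Summing over $k$ produces an alignment-based upper bound of $L_{f}\cdot d^{p}_{\text{ERP}}(\bm{A},\bm{B})$ for an alignment of $\bm{F}(\bm{A})$ and $\bm{F}(\bm{B})$; since $d^{p}_{\text{ERP}}(\bm{F}(\bm{A}),\bm{F}(\bm{B}))$ is the minimum over all such alignments, the inequality follows immediately. An equivalent argument is structural induction on $|\bm{A}|+|\bm{B}|$ using the recursive Definition~\ref{def:erp_dist} directly: the base cases $\bm{A}=\emptyset$ or $\bm{B}=\emptyset$ use $\lp{\bm{f}(\bm{a})}{p}\leq L_{f}\lp{\bm{a}}{p}$ on each term of the sum, and the recursive step reduces to showing that each of the three candidates in the inner $\min$ is scaled by at most $L_{f}$, after which the monotonicity of $\min$ concludes.

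The only genuine subtlety is the insertion/deletion terms, which is why I flag the assumption $\bm{f}(\bm{0})=\bm{0}$ as the step to state explicitly; the substitution terms are handled by the Lipschitz hypothesis verbatim, and the compositional structure of $d^{p}_{\text{ERP}}$ (either through its alignment form or its recursion) then does the rest.
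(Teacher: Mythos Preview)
Your proposal is correct and, in fact, more careful than the paper itself. The paper does not provide a proof of this lemma at all: the phrase ``we immediately have that'' in the statement is the entire argument, and the lemma is simply invoked later in the proof of \cref{theo:lips_conv} for the ReLU activation. Your alignment-based argument via \cref{prop:alt_def_derp} (or the equivalent structural induction on the recursion) is exactly the natural way to make this precise.

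You are also right to flag the hidden assumption $\bm{f}(\bm{0})=\bm{0}$. As stated, the lemma is actually false without it: take $\bm{f}(\bm{x})=\bm{x}+\bm{c}$ with $\bm{c}\neq\bm{0}$, $\bm{A}=(\bm{0})$ a single zero vector, and $\bm{B}=\emptyset$; then $d^{p}_{\text{ERP}}(\bm{A},\bm{B})=0$ but $d^{p}_{\text{ERP}}(\bm{F}(\bm{A}),\bm{F}(\bm{B}))=\lp{\bm{c}}{p}>0$. The paper only ever applies the lemma to ReLU, which does satisfy $\bm{\sigma}(\bm{0})=\bm{0}$, so the gap is harmless in context, but your explicit identification of this condition is the right way to state and prove the result.
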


\begin{lemma}[Convolution]
    Let $d^{\rebuttal{p}}_{\text{ERP}}$ be as in \cref{def:erp_dist}. Let $\bm{P}\in \{0,1\}^{m\times d}$ and $\bm{Q}\in \{0,1\}^{n\times d}$ be two sequences of $m$ and $n$ one hot-vectors respectively. Let the function working with arbitrary sequence length $l$ be $\bm{F}:\{0,1\}^{l\times d} \to \realnum^{l\times r}$. Let the convolutional filter $\bm{C}: \realnum^{l\times r} \to \realnum^{(l+q-1)\times k}$ with kernel $\bm{\mathcal{K}}\in \realnum^{q\times k \times r}$, where $q$ is the kernel size and $k$ is the number of filters. We have that:
    \[
    \begin{aligned}
         & d^{\rebuttal{p}}_{\text{ERP}}\left(\bm{C}(\bm{F}(\bm{P})), \bm{C}(\bm{F}(\bm{Q}))\right) & \leq M(\bm{\mathcal{K}})\cdot d^{\rebuttal{p}}_{\text{ERP}}\left(\bm{F}(\bm{P}), \bm{F}(\bm{Q})\right)\,.
    \end{aligned}
    \]
    where:
\[
    M(\bm{\mathcal{K}}) = \sum_{i=1}^{q}\lp{\bm{K}_i}{p}\,.
\]
    \label{lem:conv}
\end{lemma}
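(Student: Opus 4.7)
The plan is to leverage the alignment-based characterization of the ERP distance in Proposition~S3 together with the linearity of the convolution map on equal-length inputs. First, by Proposition~S3 I would choose a minimizing pair of zero-paddings $\bm{Z}^a \in \mathcal{Z}_{m,L}$ and $\bm{Z}^b \in \mathcal{Z}_{n,L}$ (with $L \le m+n$) witnessing
\[
d^{p}_{\text{ERP}}(\bm{F}(\bm{P}),\bm{F}(\bm{Q})) = \sum_{k=1}^L \lp{\tilde{\bm{a}}_k - \tilde{\bm{b}}_k}{p},
\]
where $\tilde{\bm{A}} := \bm{Z}^a(\bm{F}(\bm{P}))$ and $\tilde{\bm{B}} := \bm{Z}^b(\bm{F}(\bm{Q}))$ both lie in $\realnum^{L\times r}$. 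This replaces the variable-length comparison by one of fixed length.

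Next, because $\tilde{\bm{A}}$ and $\tilde{\bm{B}}$ have the same length, $\bm{C}$ acts linearly on them, so the explicit formula from Definition~S4 gives $\bm{c}_i(\tilde{\bm{A}}) - \bm{c}_i(\tilde{\bm{B}}) = \sum_{l=1}^q \bm{K}_l(\hat{\tilde{\bm{a}}}_{i+l-1} - \hat{\tilde{\bm{b}}}_{i+l-1})$. Applying the triangle inequality together with the operator-norm bound $\lp{\bm{K}_l \bm{v}}{p} \le \lp{\bm{K}_l}{p}\lp{\bm{v}}{p}$, summing over the output index $i$, and swapping the order of summation, I would derive
\[
\sum_i \lp{\bm{c}_i(\tilde{\bm{A}}) - \bm{c}_i(\tilde{\bm{B}})}{p} \le \sum_{l=1}^q \lp{\bm{K}_l}{p} \sum_k \lp{\tilde{\bm{a}}_k - \tilde{\bm{b}}_k}{p} = M(\bm{\mathcal{K}})\cdot d^{p}_{\text{ERP}}(\bm{F}(\bm{P}),\bm{F}(\bm{Q})).
\]
The key manipulation here is that for each fixed kernel offset $l$ the shifted index $i+l-1$ enumerates each position of the zero-padded $\hat{\tilde{\bm{a}}},\hat{\tilde{\bm{b}}}$ exactly once, while the pure zero-padding positions contribute nothing to the difference.

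The main obstacle is closing the loop: one must show $d^{p}_{\text{ERP}}(\bm{C}(\bm{F}(\bm{P})),\bm{C}(\bm{F}(\bm{Q}))) \le \sum_i \lp{\bm{c}_i(\tilde{\bm{A}}) - \bm{c}_i(\tilde{\bm{B}})}{p}$. Via Proposition~S3 this reduces to exhibiting zero-paddings $\tilde{\bm{Z}}^a \in \mathcal{Z}_{m+q-1,L+q-1}$ and $\tilde{\bm{Z}}^b \in \mathcal{Z}_{n+q-1,L+q-1}$ of $\bm{C}(\bm{F}(\bm{P}))$ and $\bm{C}(\bm{F}(\bm{Q}))$ such that $\tilde{\bm{Z}}^a(\bm{C}(\bm{F}(\bm{P})))=\bm{C}(\tilde{\bm{A}})$ and $\tilde{\bm{Z}}^b(\bm{C}(\bm{F}(\bm{Q})))=\bm{C}(\tilde{\bm{B}})$. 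The construction propagates each zero-vector insertion performed by $\bm{Z}^a$ (resp.\ $\bm{Z}^b$) into a corresponding block of zero-row insertions on the output side whose width is dictated by the kernel's receptive field. Tracking how overlapping convolution windows around an inserted input zero translate into the precise locations of the induced output zeros is the delicate combinatorial core of the argument; once this correspondence is verified, chaining the three steps yields the claimed bound.
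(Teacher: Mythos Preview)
Your approach has a genuine gap at precisely the step you flag as the ``delicate combinatorial core.'' The claim that $\bm{C}(\tilde{\bm{A}})$ can be realized as a zero-padding of $\bm{C}(\bm{F}(\bm{P}))$ is false in general: inserting a zero row in the input does not merely insert zero rows in the convolution output---it \emph{splits} the entries that previously mixed across the insertion point. Concretely, take $q=2$, $\bm{F}(\bm{P})=[\bm{a};\bm{b}]$ and $\tilde{\bm{A}}=[\bm{a};\bm{0};\bm{b}]$. Then
\[
\bm{C}(\bm{F}(\bm{P}))=\bigl[\bm{K}_2\bm{a};\ \bm{K}_1\bm{a}+\bm{K}_2\bm{b};\ \bm{K}_1\bm{b}\bigr],\qquad
\bm{C}(\tilde{\bm{A}})=\bigl[\bm{K}_2\bm{a};\ \bm{K}_1\bm{a};\ \bm{K}_2\bm{b};\ \bm{K}_1\bm{b}\bigr].
\]
No element of $\mathcal{Z}_{3,4}$ applied to the first sequence produces the second, since the row $\bm{K}_1\bm{a}+\bm{K}_2\bm{b}$ must survive in any zero-padding of $\bm{C}(\bm{F}(\bm{P}))$ but does not appear in $\bm{C}(\tilde{\bm{A}})$. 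Hence the inequality $d^p_{\text{ERP}}(\bm{C}(\bm{F}(\bm{P})),\bm{C}(\bm{F}(\bm{Q})))\le\sum_i\lp{\bm{c}_i(\tilde{\bm{A}})-\bm{c}_i(\tilde{\bm{B}})}{p}$ cannot be obtained via \cref{prop:alt_def_derp} in the way you propose.

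The paper sidesteps this by padding on the \emph{output} side rather than the input side. It starts from the minimization characterization of $d^p_{\text{ERP}}(\bm{C}(\bm{F}(\bm{P})),\bm{C}(\bm{F}(\bm{Q})))$ over zero-paddings $\bm{Z}^a,\bm{Z}^b$ of the \emph{output} sequences, and then commutes $\bm{z}^a_k$ with the finite kernel sum $\sum_{j=1}^q\bm{K}_j(\cdot)$; this is legitimate because a zero-padding acts position-wise and is linear in the entries. After the triangle inequality one is left, for each kernel offset $j$, with a comparison between the shifted windows $[\hat{\bm{f}}_{i+j-1}(\bm{P})]_i$ and $[\hat{\bm{f}}_{i+j-1}(\bm{Q})]_i$. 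Each such window is exactly $\bm{F}(\bm{P})$ (resp.\ $\bm{F}(\bm{Q})$) flanked by zeros, so by invariance to zero concatenation (\cref{lem:erp_props}(b)) its ERP distance is $d^p_{\text{ERP}}(\bm{F}(\bm{P}),\bm{F}(\bm{Q}))$, independent of $j$. The paper therefore never needs convolution to commute with input-side zero-padding.
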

\begin{proof}[Proof of \cref{lem:conv}]
Let $L = m+n+2q-2$. Starting from the definition of the ERP distance in \cref{lem:erp_props} and the definition of the convolutional layer in \cref{def:conv}:
    \[
    \begin{aligned}
        & d^{\rebuttal{p}}_{\text{ERP}}(\bm{C}(\bm{F}(\bm{P})), \bm{C}(\bm{F}(\bm{Q})))\\
        & = \underset{ \bm{Z}^{b}\in \mathcal{Z}_{n+q-1,L}}{\min_{\bm{Z}^{a}\in \mathcal{Z}_{m+q-1,L},}} \sum_{k = 1}^{L}\lp{\bm{z}_{k}^{a}(\bm{C}(\bm{F}(\bm{P})))-\bm{z}_{k}^{b}(\bm{C}(\bm{F}(\bm{Q})))}{p}\\
        & = \underset{ \bm{Z}^{b}\in \mathcal{Z}_{n+q-1,L}}{\min_{\bm{Z}^{a}\in \mathcal{Z}_{m+q-1,L},}} \sum_{k = 1}^{L}\lp{\bm{z}_{k}^{a}\left(\left[\sum_{j=1}^{m}\hat{\bm{K}}_{i,j}\hat{\bm{f}}_j(\bm{P})\right]_{i=1}^{m+q-1}\right)-\bm{z}_{k}^{b}\left(\left[\sum_{l=1}^{n}\hat{\bm{K}}_{i,l}\hat{\bm{f}}_l(\bm{Q})\right]_{i=1}^{n+q-1}\right)}{p}\\
        & = \underset{ \bm{Z}^{b}\in \mathcal{Z}_{n+q-1,L}}{\min_{\bm{Z}^{a}\in \mathcal{Z}_{m+q-1,L},}} \sum_{k = 1}^{L}\lp{\bm{z}_{k}^{a}\left(\left[\sum_{j=1}^{q}\bm{K}_{j}\bm{f}_{i+j-1}(\bm{P})\right]_{i=1}^{m+q-1}\right)-\bm{z}_{k}^{b}\left(\left[\sum_{j=1}^{q}\bm{K}_{j}\bm{f}_{i+j-1}(\bm{Q})\right]_{i=1}^{n+q-1}\right)}{p}\\
        & = \underset{ \bm{Z}^{b}\in \mathcal{Z}_{n+q-1,L}}{\min_{\bm{Z}^{a}\in \mathcal{Z}_{m+q-1,L},}} \sum_{k = 1}^{L}\lp{\sum_{j=1}^{q}\bm{K}_{j}\left(\bm{z}_{k}^{a}\left(\left[\bm{f}_{i+j-1}(\bm{P})\right]_{i=1}^{m+q-1}\right)-\bm{z}_{k}^{b}\left(\left[\bm{f}_{i+j-1}(\bm{Q})\right]_{i=1}^{n+q-1}\right)\right)}{p}\\
        & \leq \underset{ \bm{Z}^{b}\in \mathcal{Z}_{n+q-1,L}}{\min_{\bm{Z}^{a}\in \mathcal{Z}_{m+q-1,L},}} \sum_{k = 1}^{L}\sum_{j=1}^{q}\lp{\bm{K}_{j}}{p}\cdot \lp{\bm{z}_{k}^{a}\left(\left[\bm{f}_{i+j-1}(\bm{P})\right]_{i=1}^{m+q-1}\right)-\bm{z}_{k}^{b}\left(\left[\bm{f}_{i+j-1}(\bm{Q})\right]_{i=1}^{n+q-1}\right)}{p}\\
        & = \underset{ \bm{Z}^{b}\in \mathcal{Z}_{n+q-1,L}}{\min_{\bm{Z}^{a}\in \mathcal{Z}_{m+q-1,L},}} \sum_{j=1}^{q}\lp{\bm{K}_{j}}{p}\cdot\sum_{k = 1}^{L} \lp{\bm{z}_{k}^{a}\left(\left[\bm{f}_{i+j-1}(\bm{P})\right]_{i=1}^{m+q-1}\right)-\bm{z}_{k}^{b}\left(\left[\bm{f}_{i+j-1}(\bm{Q})\right]_{i=1}^{n+q-1}\right)}{p}\\
        & = \sum_{j=1}^{q}\lp{\bm{K}_{j}}{p}\cdot d^{\rebuttal{p}}_{\text{ERP}}\left(\bm{F}(\bm{P}), \bm{F}(\bm{Q})\right)\,,\\
    \end{aligned}
    \]
where the last equality follows from the fact that $\left[\bm{f}_{i+j-1}(\bm{P})\right]_{i=1}^{m+q-1}$ and $\left[\bm{f}_{i+j-1}(\bm{Q})\right]_{i=1}^{n+q-1}$ are just windows of $\bm{F}(\bm{P})$ and $\bm{F}(\bm{Q})$ respectively including the complete sequences $\bm{F}(\bm{P})$ and $\bm{F}(\bm{Q})$, resulting in $d^{\rebuttal{p}}_{\text{ERP}}\left(\left[\bm{f}_{i+j-1}(\bm{P})\right]_{i=1}^{m+q-1},\left[\bm{f}_{i+j-1}(\bm{Q})\right]_{i=1}^{n+q-1}\right) = d^{\rebuttal{p}}_{\text{ERP}}\left(\bm{F}(\bm{P}),\bm{F}(\bm{Q})\right)~~\forall j = 1,\cdots, q$.
\end{proof}

\begin{proof}[Proof of \cref{theo:lips_conv}]
    We will bound the absolute value of the difference of outputs for two sentences $\bm{P},\bm{Q}\in \mathcal{S}(\Gamma)$ of lengths $m$ and $n$ respectively. For any $y$ and $\hat{y}$:
    \[
    \resizebox{\textwidth}{!}{$\begin{aligned}
        \left|g_{y,\hat{y}}(\bm{P})-g_{y,\hat{y}}(\bm{Q})\right| & \coloneqq \left|\left(\sum_{i=1}^{m+l\cdot(q-1)}\bm{\sigma}\left(\bm{c}_i^{(l)}\left(\bm{P}\bm{E}\right)\right) - \sum_{j=1}^{n+l\cdot(q-1)}\bm{\sigma}\left(\bm{c}_j^{(l)}\left(\bm{Q}\bm{E}\right)\right)\right)(\bm{w}_{\hat{y}}- \bm{w}_{y})\right|\\
        \pcomment{Hölder's inequality} & \leq \lp{\bm{w}_{\hat{y}}- \bm{w}_{y}}{r}\cdot \lp{\sum_{i=1}^{m+l\cdot(q-1)}\bm{\sigma}\left(\bm{c}_i^{(l)}\left(\bm{P}\bm{E}\right)\right) - \sum_{j=1}^{n+l\cdot(q-1)}\bm{\sigma}\left(\bm{c}_j^{(l)}\left(\bm{Q}\bm{E}\right)\right)}{p}\\
        \pcomment{\cref{lem:sumdiff}} & \leq \lp{\bm{w}_{\hat{y}}- \bm{w}_{y}}{r}\cdot d^{\rebuttal{p}}_{\text{ERP}}\left(\bm{\sigma}\left(\bm{C}^{(l)}\left(\bm{P}\bm{E}\right)\right), \bm{\sigma}\left(\bm{C}^{(l)}\left(\bm{Q}\bm{E}\right)\right)\right)\\
        \pcomment{\cref{lem:elementwise-lips} and \cref{lem:conv} recursively} & \leq \lp{\bm{w}_{\hat{y}}- \bm{w}_{y}}{r}\cdot \left(\prod_{k=1}^{l}M(\bm{\mathcal{K}}^{(k)})\right)\cdot d^{\rebuttal{p}}_{\text{ERP}}\left(\bm{P}\bm{E}, \bm{Q}\bm{E}\right)\\
        \pcomment{\cref{lem:linear_transform}} & \leq \lp{\bm{w}_{\hat{y}}- \bm{w}_{y}}{r}\cdot \left(\prod_{k=1}^{l}M(\bm{\mathcal{K}}^{(k)})\right)\cdot M(\bm{E})\cdot d_{\text{Lev}}\left(\bm{P}, \bm{Q}\right)\,.\\
    \end{aligned}$}
    \]
\end{proof}

\subsection{\rebuttal{Remarks regarding global and local \lips{} constants}}
\label{subsec:global_vs_local}

\rebuttal{In this section we introduce some interesting remarks regarding how global and local \lips{} constants are employed in this work. We define global and local \lips{} constants as:}

\rebuttal{
\begin{definition}[Global \lips{} constant]
    Let $g : \mathcal{S}(\Gamma) \to \realnum$ and $d_{\text{Lev}}$ be defined as in \cref{eq:lev}. We say $G$ is a global \lips{} constant if:
    \[
    |g(\bm{P})-g(\bm{Q})|\leq G\cdot d_{\text{Lev}}(\bm{P}, \bm{Q})~~ \forall \bm{P},\bm{Q}\in \mathcal{S}(\Gamma)\,.
    \]
\label{def:global_lips}
\end{definition}}

\rebuttal{The \lips{} constant $G$ in \cref{def:global_lips} is valid for any two sentences $\bm{P}$ and $\bm{Q}$, one example is our \lips{} constant in \cref{theo:lips_conv}. When additional conditions are posed on the set of sentences where $G$ is valid, we say a \lips{} constant is \emph{local}. A specific case case of locality is when the \lips{} constant depends on one of the arguments of the distance as $G(\bm{P})$, this is the kind of local \lips{} constants we observe in this work:}

\rebuttal{
\begin{definition}[Local \lips{} constant]
    Let $g : \mathcal{S}(\Gamma) \to \realnum$ and $d_{\text{Lev}}$ be defined as in \cref{eq:lev}. We say $G : \mathcal{S}(\Gamma) \to \realnum^{+}$ is a global \lips{} constant if:
    \[
    |g(\bm{P})-g(\bm{Q})|\leq G(\bm{P})\cdot d_{\text{Lev}}(\bm{P}, \bm{Q})~~ \forall \bm{P},\bm{Q}\in \mathcal{S}(\Gamma)\,.
    \]
\label{def:local_lips}
\end{definition}}
\rebuttal{Some examples of such local \lips{} constants are \cref{rem:local_lips_emb,theo:lips_conv_l}. Some properties to consider regarding global and local \lips{} constants are:}
\rebuttal{
\begin{remark}[Global \lips{} constants upper bound local \lips{} constants]
    Let $G$ be a global \lips{} constant as in \cref{def:global_lips} and $G : \mathcal{S}(\Gamma) \to \realnum^{+}$ a local \lips{} constant as in \cref{def:local_lips}, we have that:
    \[
        G(\bm{P}) \leq G \quad \forall \bm{P} \in \mathcal{S}(\Gamma)\,.  
    \]
\label{rem:locality_is_amazing}
\end{remark}}

\rebuttal{
\begin{remark}[Local \lips{} constants might not hold everywhere]
    Let $G : \mathcal{S}(\Gamma) \to \realnum^{+}$ be a local \lips{} constant as in \cref{def:local_lips}, there might exist some $\bm{P}, \bm{Q}, \bm{R} \in \mathcal{S}(\Gamma)$ such that:
    \[
        |g(\bm{Q})-g(\bm{R})| > G(\bm{P})\cdot d_{\text{Lev}}(\bm{Q}, \bm{R})\,.
    \]
\label{rem:locality_sucks}
\end{remark}}

\rebuttal{\cref{rem:locality_is_amazing,rem:locality_sucks} highlight the two key aspects of local \lips{} constants. While the bound is tighter than for global \lips{} constants (\cref{rem:locality_is_amazing}), these bounds can only be employed to give the certified radius around the sentence $\bm{P}$ where we compute the local \lips{} constant $G(\bm{P})$, otherwise, the \lips{}ness property is lost (\cref{rem:locality_sucks}).}

\end{document}